\newcommand{\N}{\mathbb{N}}
\newcommand{\R}{\mathbb{R}}
\renewcommand{\P}{\mathbb{P}}
\newcommand{\D}{\mathcal{D}}
\renewcommand{\d}{\textnormal{d}}
\newcommand{\E}{\mathbb{E}}
\renewcommand{\d}{\textnormal{d}}
\renewcommand{\H}{\mathcal{H}}
\renewcommand{\1}{\mathbf{1}}
\newcommand{\MSE}{\textnormal{MSE}}
\newcommand{\Var}{\textnormal{Var}}
\newcommand{\loss}{\mathcal{L}}
\newcommand{\X}{\mathcal{X}}
\newcommand{\Y}{\mathcal{Y}}
\newcommand{\T}{\mathcal{T}}
\theoremstyle{definition}
\newtheorem{definition}{Definition}
\theoremstyle{plain}
\newtheorem{lemma}{Lemma}
\newtheorem{theorem}{Theorem}
\newtheorem{corollary}{Corollary}
\newcommand{\appproof}[2]{
\begin{proof}[Proof of \cref{#1}]
#2
\end{proof}
}
\begin{document}

\title{On the Change of Decision Boundaries and Loss in Learning with Concept Drift\thanks{We gratefully acknowledge funding by the BMBF TiM, grant number 05M20PBA.}}
\author{Fabian Hinder${}^{1}$, 
Valerie Vaquet${}^{1}$, \\
Johannes Brinkrolf${}^{1}$, and
Barbara Hammer${}^{1}$
\\\;\\
${}^{1}$CITEC, Bielefeld University, Bielefeld, Germany \\
\texttt{\{fhinder,vvaquet,jbrinkro,bhammer\}@techfak.uni-bielefeld.de}
}
%
%
%
\maketitle              
\begin{abstract}
The notion of concept drift refers to the phenomenon that the distribution generating the observed data changes over time. If drift is present, machine learning models may become inaccurate and need adjustment. Many technologies for learning with drift rely on the interleaved test-train error (ITTE) as a quantity which
approximates the model generalization error and triggers drift detection and model updates.
In this work, we investigate in how far this procedure is mathematically justified. More precisely, we relate a change of the ITTE  to 
 the presence of real drift, i.e.,\ a changed 
posterior, and to a change of the training result under the assumption of optimality.
We support our theoretical findings by  empirical evidence for  several learning algorithms, models, and datasets. \\
{\textbf{Keywords: } Concept Drift $\:\cdot\:$ Stream Learning $\:\cdot\:$ Learning Theory $\:\cdot\:$ Error Based Drift Detection.}
\end{abstract}

\section{Introduction}
\label{sec:intro}

The world that surrounds us is subject to constant change, which also affects the increasing amount of data  collected over time,  in social media, sensor networks, IoT devices, etc.
Those changes, referred to as concept drift, can be caused by seasonal changes, changing demands of individual customers, aging or failing sensors, and many more. As drift constitutes a major issue in many applications, considerable research is focusing on this setting \cite{DBLP:journals/cim/DitzlerRAP15}.
Depending on the domain of data and application, different drift scenarios might occur:
For example, covariate shift refers to the situation that training and test sets have different marginal distributions~\cite{5376}. 

In recent years, a large variety of methods for learning in presence of drift has been proposed~\cite{DBLP:journals/cim/DitzlerRAP15},
whereby a majority of the approaches targets supervised learning scenarios.
Here, one distinguishes between virtual and real drift, i.e.\ non-stationarity of the marginal distribution only or also the posterior. Learning technologies often rely on windowing techniques and adapt the model based on the characteristics of the data in an observed time window. 
Here, many approaches use non-parametric methods or ensemble technologies~\cite{asurveyonconceptdriftadaption}.
Active methods explicitly detect drift, usually referring to drift of the classification error, and trigger model adaptation this way, while passive methods continuously adjust the model~\cite{DBLP:journals/cim/DitzlerRAP15}. Hybrid approaches combine both methods by continuously adjusting the model unless drift is detected and a new model is trained. 

In most techniques, evaluation takes place by means of the so-called 
interleaved train-test error (ITTE), which evaluates the current model on a given new data point before using it for training. This error is used to evaluate the overall performance of the algorithm, as well as to detect drifts in case of significant changes in the error or to control important parameters such as the window size~\cite{DBLP:journals/ijon/LosingHW18}.
Thereby,
these techniques often rely on
strong assumptions regarding the underlying process, e.g.,\ 
they detect a drift when the classification accuracy drops below a predefined threshold during a predefined time.
Such  methods 
face problems if the underlying drift characteristics do not align with these assumptions.

Here, we want to shed some light on the suitability of such choices and investigate 
the mathematical properties of the ITTE when used as an evaluation scheme. 
As the phenomenon of concept drift is 
widespread,
a theoretical understanding of the relation between
drift and the adaption behavior of learning models becomes crucial.
Currently, the majority of theoretical work for drift learning focuses on learning guarantees which are  similar in nature to the work of
 Vapnik in the batch case~\cite{mohri2012new,hanneke2015learning,hanneke2019statistical}.
Although those results provide interesting insights into the validity of learning models in the streaming setup, they 
focus on worst-case scenarios  and hence provide
very loose bounds on average only. 
In contrast, in this work, we focus on theoretical aspects of the learning algorithm itself in non-stationary environments, targeting general learning models including unsupervised ones. In contrast to the existing literature, we focus on alterations of models. This perspective is closely connected to the actual change of decision boundaries and average cases. In particular, we provide a mathematical substantiation of the suitability of the ITTE to evaluate model drift.

This paper is organized as follows: First (Section~\ref{sec:Setup}) we recall the basic notions of statistical learning theory and concept drift followed by reviewing the existing literature, positioning of this work with respect to it, and concertize the research questions (Section~\ref{sec:RelWork}). We proceed with a theoretical analysis focusing on (1) changes of the decision boundary in presence of drift (Section~\ref{sec:ModelDrift}), (2) changes of the training result (Section~\ref{sec:ChangeOfModel}), and (3) the connection of ITTE, drift, and the change of the optimal model (Section~\ref{sec:ChangeOfLoss}). 
Afterward, we empirically quantify the theoretical findings (Section~\ref{sec:Experiments})
and conclude with a summary (Section~\ref{sec:Conclusion}).

\section{Problem Setup, Notation, and Related Work}
\label{sec:Setup}

We make use of the formal framework for concept drift as introduced in \cite{DAWIDD,ContTime} as well as classical statistical learning theory, e.g., as presented in~\cite{shalev2014understanding}. In this section, we recall the basic notions of both subjects followed by a summary of the related work on learning theory in the context of concept drift.

\subsection{Basic Notions of Statistical Learning Theory}
\label{sec:IntroLT}

In classical learning theory, one considers a hypothesis class $\H$, e.g., a set of functions 
from $\R^d$ to $\R$, 
together with a non-negative loss function $\ell : \H \times (\X \times \Y) \to \R_{\geq 0}$ that is used to evaluate how well a model $h$ matches an observation $(x,y) \in \X \times \Y$ by assigning an error $\ell(h,(x,y))$. 
We will refer to $\X$ as the data space and $\Y$ as the label space. 
For a given distribution $\D$ on $\X \times \Y$
we consider $\X$- and $\Y$-valued random variables $X$ and $Y$, $(X,Y) \sim \D$, 
and assign the loss $\loss_{\D}(h) = \E[\ell(h,(X,Y))]$ to a model $h \in \H$. Using data sample $S \in \cup_{N \in \N} (\X \times \Y)^N$ consisting of i.i.d.\ random variables $S = ((X_1,Y_1),\dots,(X_n,Y_n))$ distributed according to $\D$, we can approximate $\loss_{\D}(h)$ using the empirical loss $\loss_S(h) = \frac{1}{n} \sum_{i = 1}^n \ell(h,(X_i,Y_i))$, which 
converges to $\loss_{\D}(h)$ almost surely. Popular loss functions are the mean squared error $\ell(h,(x,y)) = (h(x)-y)^2$, cross-entropy $\ell(h,(x,y)) = \sum_{i = 1}^n \1[y = i] \log h(x)_i$, 
or the 0-1-loss $\ell(h,(x,y)) = \1[h(x) \neq y]$. 
Notice that this setup also covers unsupervised learning problems
, i.e., $\Y = \{*\}$. 

In machine learning, training a model often refers to minimizing the loss $\loss_{\D}(h)$ using the empirical loss $\loss_S(h)$ as a proxy. A learning algorithm $A$, such as gradient descent schemes, selects a model $h$ given a sample $S$, i.e., $A : \cup_N (\X \times \Y)^N \to \H$. Classical learning theory investigates under which circumstances $A$ is consistent, that is, it selects a good model with high probability: $ \loss_\D(A(S)) \to \inf_{h^* \in \H} \loss_\D(h^*)$ as $|S| \to \infty$ in probability. 
loss $\loss_S$ and model $A(S)$ become dependent by training,
classical approaches aim for uniform bounds
$\sup_{h \in \H} |\loss_S(h) - \loss_\D(h)| \to 0$ as $|S| \to \infty$ in probability. 

\subsection{A Statistical Framework for Concept Drift}
\label{sec:IntroDrift}

The classical setup of learning theory 
assumes a time-invariant distribution $\D$ for all $(X_i,Y_i)$. 
This assumption is violated in many real-world applications, in particular, when learning on data streams. Therefore, we incorporate time into our considerations 
by means of an index set $\T$, representing time, and a collection of (possibly different) distributions $\D_t$ on $\X \times \Y$, indexed over $\T$~
\cite{asurveyonconceptdriftadaption}. In particular, the model $h$ and its loss also become time-dependent. It is possible to extend  this setup to a general statistical interdependence of data and time via a distribution $\D$ on $\T \times (\X \times \Y)$ which decomposes into a distribution $\P_T$ on $\T$ and the conditional distributions $\D_t$ on $\X \times \Y$  \cite{DAWIDD,ContTime}.
Notice that this setup  \cite{ContTime} is very general 
and can therefore be applied in different scenarios (see Section~\ref{sec:Conclusion}), 
albeit our main example is binary classification on a time interval, i.e.\ $\X = \R^d$, $\Y = \{0,1\}$, and $\T = [0,1]$. 

Drift refers to the fact that $\D_t$ varies for different time points, i.e.\ 
$\{ (t_0,t_1) \in \T^2 : \D_{t_0} \neq \D_{t_1} \}$
has measure larger zero w.r.t\ $\P_T^2$~\cite{DAWIDD,ContTime}. 
One further distinguishes a change of the posterior $\D_t(Y|X)$, referred to as \emph{real drift}, and of the marginal $\D_t(X)$, referred to as \emph{virtual drift}.
One of the key findings of \cite{DAWIDD,ContTime} is a unique characterization of the presence of drift by the property of  statistical dependency of time $T$ and data $(X,Y)$ if a time-enriched representation of the data $(T,X,Y) \sim \D$ is considered.
Determining whether or not there is drift during a time period is referred to as \emph{drift detection}. 

Since the distribution $\D_t$ can shift too rapidly to enable a faithful estimation of quantities thereof, we propose to address 
time windows $W \subset \T$ and to consider all data points, that are observed during $W$, analogous to an observation in classical learning theory. This leads to the following formalization~\cite{ida2022}:

\begin{definition}
Let $\X, \Y, \T$ be measurable spaces.
Let $(\D_t,\P_T)$ be a \emph{drift process}~\cite{DAWIDD,ContTime} on $\X \times \Y$ and $\T$, i.e.\ a distribution $\P_T$ on $\T$ and Markov kernels $\D_t$ from $\T$ to $\X \times \Y$.
A \emph{time window} $W \subset \T$ is a $\P_T$ non-null set. A \emph{sample (of size $n$) observed during $W$} is a tuple $S = ((X_1,Y_1),\dots,(X_n,Y_n))$ drawn i.i.d. from the mean distribution on $W$, that is $\D_W := \D(X,Y\mid T \in W)$.
\end{definition}

That resembles the practical procedure, where one obtains sample $S_1$ during $W_1$ from another sample $S_2$ during $W_2$, with $W_1 \subset W_2$, by selecting those entries of $S_2$ that are observed during $W_1$. 
In particular, if $W_1 = \{0,\dots,t-1\}, \; W_2 = \{0,\dots,t\}$ this corresponds to an incremental update.

In this work we will consider data drawn from a single drift process, thus we will  make use of the following short hand notation
$\loss_t(h) := \loss_{\D_t}(h)$ for a time point $t \in \T$ and $\loss_W(h) = \loss_{\D_W}(h)$ for a time window $W \subset \T$, where $\D_W = \E[\D_T\mid T \in W]$ denotes the mean of $\D_t$ during $W$ and $\loss(h) := \loss_\T(h)$ is the loss on the entire stream. Notice that this is well defined, i.e.,\ $\loss_W(h) = \E[\ell(h,(X,Y)) \mid T \in W] = \E[\loss_T(h) \mid T \in W]$ assuming $\loss(h) < \infty$. 
In stream learning, some algorithms put more weight on newer observations, e.g., by continuously updating the model. Such considerations can be easily integrated into our framework, but we omit them for simplicity.

\subsection{Related Work, Existing Methods, and Research Questions}
\label{sec:RelWork}

\begin{algorithm}[tb]
   \caption{Typical Stream Learning Algorithm}
   \label{alg:streamalg}
\begin{algorithmic}[1]
   \State {\bfseries Input:} {$S$ data stream, $A$ training algorithm, $h_0$ initial model, $\ell$ loss function, $D$ drift detector} 
   \State Initialize model $h \gets h_0$
   \While{Not at end of stream $S$}
   \State Receive new sample $(x,y)$ from stream $S$
   \State 
   Compute ITTE $L \gets \ell(h,(x,y))$
   \State Update model $h \gets A(h,(x,y))$ \Comment{Passive Adaption} \label{alg:streamalg:passiv}
   \If{Detect drift $D(L)$} 
   \State{Reset model $h \gets h_0$ OR Retrain on next samples}\Comment{Active Adaption}\label{alg:streamalg:active}
   \EndIf
   \EndWhile
\end{algorithmic}
\end{algorithm}

Algorithm~\ref{alg:streamalg} shows the outline of a typical (hybrid) stream learning algorithm.
Stream learning algorithms can be split into two categories~\cite{DBLP:journals/cim/DitzlerRAP15}: passive methods, which adapt the model slightly in every iteration (line~\ref{alg:streamalg:passiv}), and active methods, which train a new model once drift is detected (line~\ref{alg:streamalg:active}). There also exist hybrid methods that integrate both characteristics.

Most existing theoretical work on stream learning in the context of drift derives learning guarantees as inequalities of the following form: the risk on a current time window $W_2 \: (=\{t+1\})$ is bounded using the risk on a time window $W_1 \: (= \{1,\dots,t\})$ and a distributional difference in between those windows~\cite{mohri2012new,hanneke2019statistical}:
\begin{align}
    \underbrace{\loss_{W_2}(h)}_{\text{application time risk}} \leq\:\: \underbrace{\loss_{W_1}(h)}_{\text{train time risk}} \:\:+\quad \underbrace{\sup_{h' \in \mathcal{H}} \left| \loss_{W_2}(h') - \loss_{W_1}(h')\right|}_{\text{distributional discrepancy}}. \label{eq:loss_decompose}
\end{align}
Most approaches aim for a good upper bound of the train time risk~\cite{mohri2012new,hanneke2015learning}. 
This inequality then gives rise to convergence guarantees, which are usually applied by splitting the so far observed stream into several chunks and training a model on each of them~\cite{hanneke2019statistical,hanneke2015learning}.

A crucial aspect of the inequality is the distributional discrepancy.
Notice that it is closely related to other statistical quantities 
like the total variation norm~\cite{DAWIDD,mohri2012new,hanneke2019statistical} or the Wasserstein distance.  It 
provides a bound that refers to the worst possible outcome regarding the drift.
Although this scenario can theoretically occur, (see examples given in \cite[Theorem 2]{hanneke2015learning}), it is  not likely in practice. 
For example, for kernel based binary classifiers on $\Y = \{1,2\}$ it holds:
\begin{align*}
        \sup_{h \in \mathcal{H}} \left| \loss_{W_1}(h) - \loss_{W_2}(h)\right| 
    &=\text{MMD}\left( \frac{\sum_i \D_{W_i}(X, Y=i)}{\sum_i \D_{W_i}(Y=i)}, \frac{\sum_i \D_{W_i}(X, Y\neq i)}{\sum_i \D_{W_i}(Y\neq i)} \right),
\end{align*}
where MMD refers to the maximum mean discrepancy.
This term is closely related to the statistic used in popular unsupervised 
drift detectors~\cite{kernel2sampletest}. Thus, we obtain large values even if the decision boundary is not affected by drift. 

\begin{figure}[t]
    \centering
    \begin{minipage}[b]{0.31\textwidth}
    \centering
    \includegraphics[width=\textwidth]{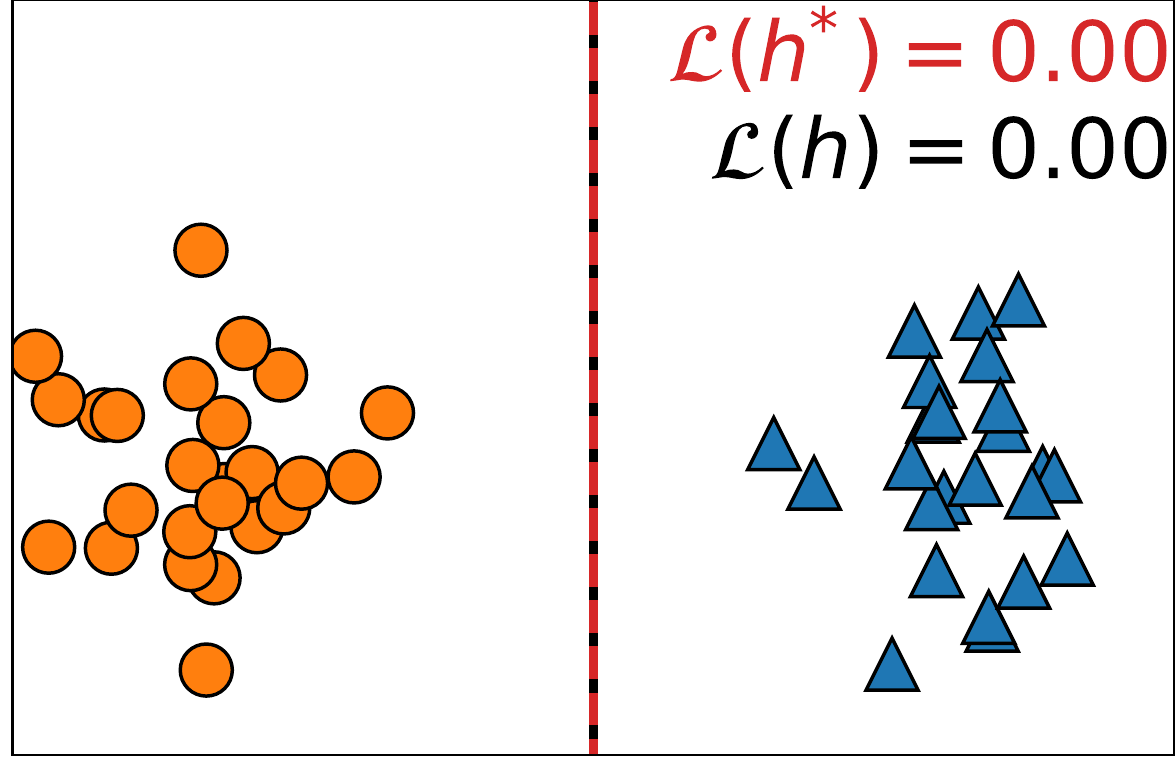}
    \subcaption{Before drift}
    \end{minipage}
    \hfill
    \begin{minipage}[b]{0.31\textwidth}
    \centering
    \includegraphics[width=\textwidth]{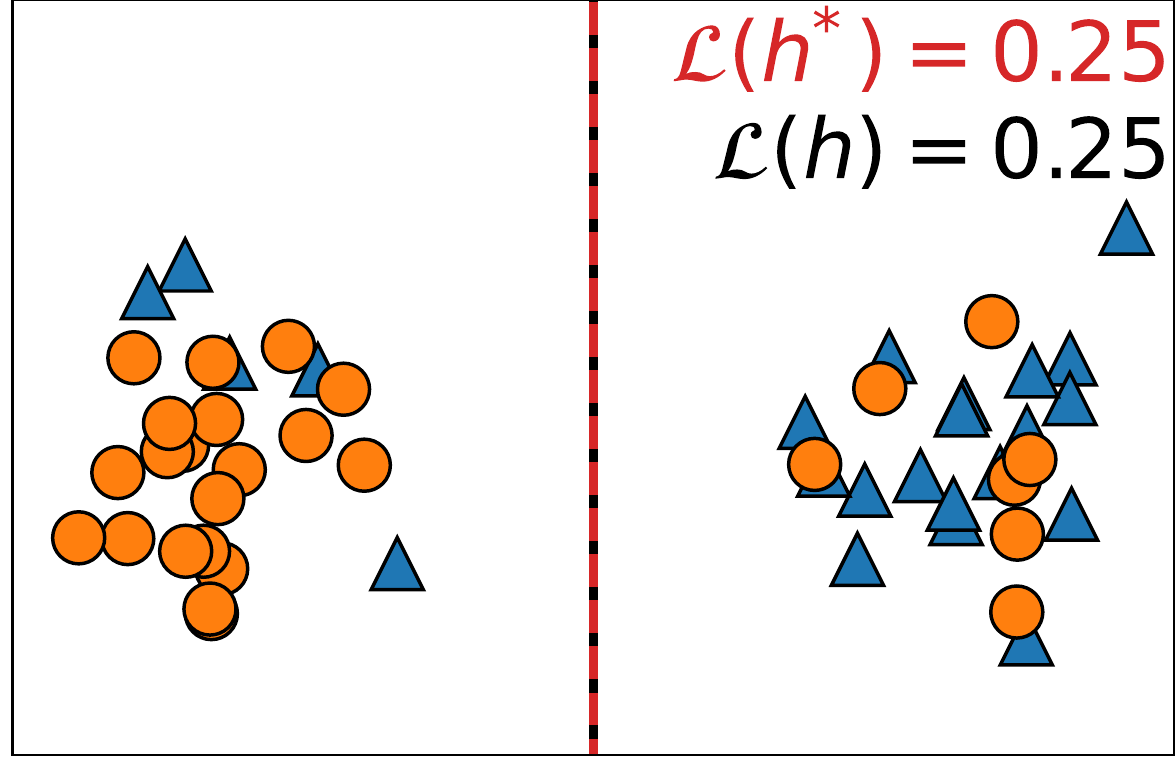}
    \subcaption{First drift: Add noise}
    \end{minipage}
    \hfill
    \begin{minipage}[b]{0.31\textwidth} 
    \centering
    \includegraphics[width=\textwidth]{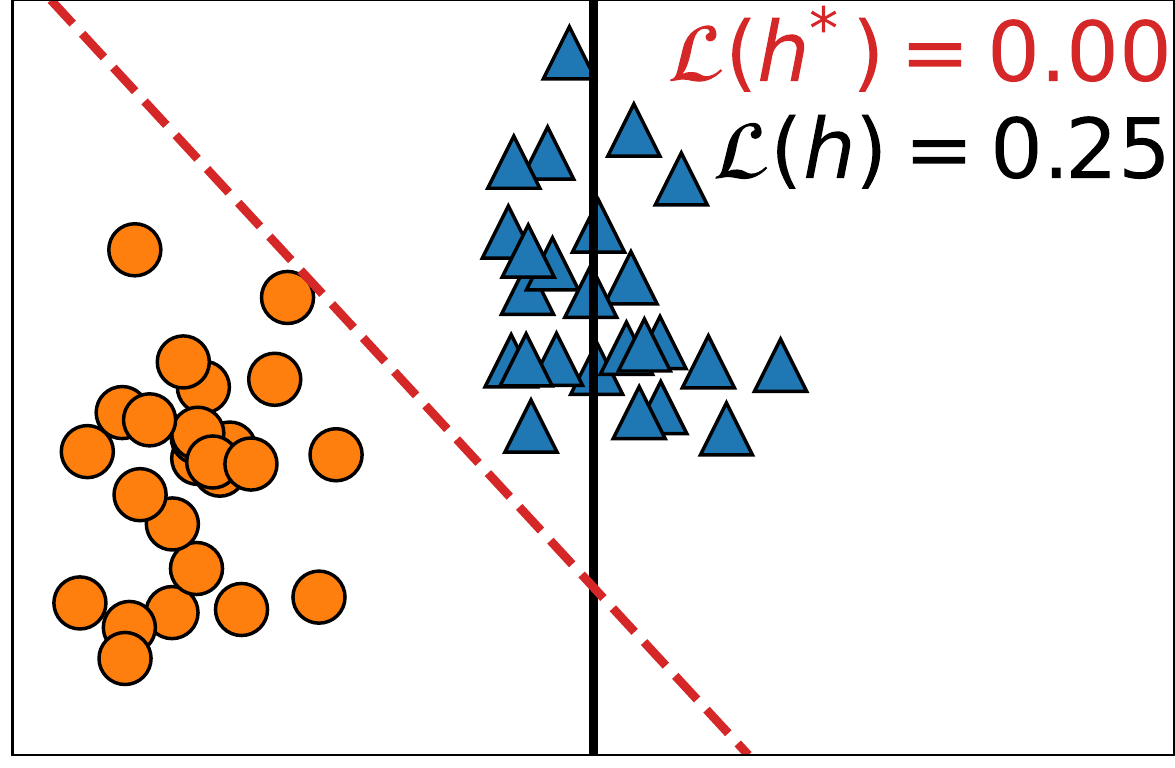}
    \subcaption{Second drift: Movement}
    \end{minipage}
    \hfill
    \caption{Effect of drift on model loss of a fixed and optimal model. Graphic shows fixed model $h$ (black line), optimal model $h^*$ (red dashed line), and model losses.}
    \label{fig:ITTE}
\end{figure}

In practice, few drift-learning algorithms refer to Eq.~\eqref{eq:loss_decompose}. Instead, a comparison of the current and historical loss is estimated using the ITTE scheme. This is commonly used to detect drift and, more generally, to evaluate the model ~\cite{asurveyonconceptdriftadaption}. However, this procedure is not flawless as can be seen in Figure~\ref{fig:ITTE}: The ITTE of a fixed model can change without a change of the optimal model and vice versa. 
Based on these insights, we aim for a better understanding usage of stream learning algorithms in the context of drift and novel techniques derived thereof, answering the following questions:
\begin{enumerate}
    \item\label{Q:2} How are model changes related to different types (real/virtual) of drift? 
    \item\label{Q:4} What is the relation of optimal models and the output of learning algorithms on different time windows? When to retrain the model?
    \item\label{Q:5} How are changes of the optimal model mirrored in changes of the ITTE? 
\end{enumerate}

\section{Theoretical Analysis}
\label{sec:Theo}
To 
answer these research questions we propose four formal definitions, each reflecting a different aspect and point of view of drift. We then compare those definitions, show formal implications, and provide counterexamples in case of differences, in order to provide the desired answers. We summarize our findings in Figure~\ref{fig:defs}, displaying different types of drift definitions and their implications.

We will refer to the types of drift that affect models as \emph{model drift}.
It is a  generalization of the notion of model drift in the work~\cite{DAWIDD,ContTime}, which is based on the comparison of the distribution 
for two different time windows, i.e.,\ $\D_{W_1} \neq \D_{W_2}$. We extend this idea  to incorporate model and loss-specific properties. 

\begin{figure}[t]
    \centering
    \begin{align*}
        \xymatrix{
        & & \text{real drift} \ar@{<=>}[rd]|{(4)} \ar@{=>}[lld] & \\
        \text{drift} & \text{$\ell$-model drift} \ar@{=>}[l]& \text{$A$-model drift} \ar@{<=>}[l]|{(1+3)} \ar@{=>}[d]|{(1)} & \text{$\H$-model drift} \ar@{=>}[l]|{(1)}\\
         & & \text{weak $\H$-model drift}  \ar@{<=>}[ur]|{(2)} \ar@{=>}[llu] \ar@{<=>}[ul]|{(3)}
        }
    \end{align*}
    \caption{Definitions and implications. Numbers indicate needed assumptions: $(1)$ $A$ is consistent, $(2)$ loss uniquely determines model, $(3)$ optimal loss is unchanged, $(4)$ universal hypothesis class of probabilistic models with non-regularized loss.}
    \label{fig:defs}
\end{figure}
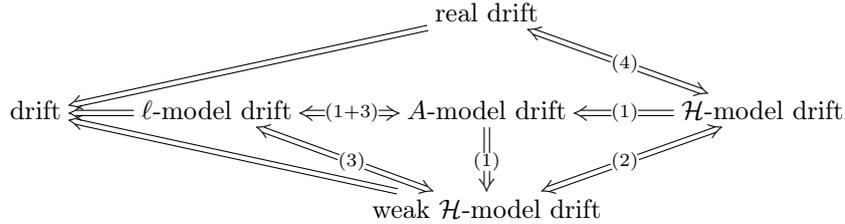

\subsection{Model Drift as Inconsistency of Optimal Models}
\label{sec:ModelDrift}

The concept of model drift can be considered from two points of view: different training results (see Section~\ref{sec:ChangeOfModel}) and inconsistency of optimal models. 
We deal with the latter notion first. Using loss as a proxy for performance, we consider that a model performs well if it has a loss comparable to the minimal achievable loss. 
We refer to this as \emph{hypothetical-} or \emph{$\H$-model drift}, which is defined as follows:

\begin{definition}
\label{def:H_model_drift}
Let $\H$ be a hypothesis class, $\ell$ a loss function on $\H$, and $\D_t$ be a drift process. 
We say that $\D_t$ has \emph{(strong) $\H$-model drift} iff there exist time windows without a common well-performing model, i.e.,\ there exist measurable $\P_T$ non-null sets $W_1,W_2 \subset \T$ and $C > 0$ such that for every $h \in \H$ 
either $\loss_{W_1}(h) > \inf_{h^* \in \H} \loss_{W_1}(h^*) + C$ or $\loss_{W_2}(h) > \inf_{h^* \in \H} \loss_{W_2}(h^*) + C$.
We say that $\D_t$ has \emph{weak $\H$-model drift} iff some model stops being optimal, i.e.,\ for some time windows $W_1, W_2$ 
there exists a $C > 0$ such that for all $\varepsilon < C$ there is some $h \in \H$
such that $\loss_{W_1}(h) \leq \inf_{h^* \in \H} \loss_{W_1}(h^*) + \varepsilon$ and $\loss_{W_2}(h) > \inf_{h^* \in \H} \loss_{W_2}(h^*) + C$.
\end{definition}

When not specified, $\H$-model drift refers to strong $\H$-model drift.
The difference between strong and weak $\H$-model drift is that strong $\H$-model drift rules out the existence of a single model that performs well during both time windows, whereas weak $\H$-model drift only states that there exists at least one  model that performs well on one but not the other time window. Thus, strong $\H$-model drift implies that model adaptation is strictly necessary for optimum results, whereas 
the necessity of model adaptation for weak $\H$-model drift might depend on the specific choice of the model. 
Strong $\H$-model drift implies weak $\H$-model drift.
This raises the question under which circumstances the converse is also true. It turns out that loss functions inducing unique optima are sufficient:

\begin{lemma}
\label{lem:H_model_drift}
If $\D_t$ has $\H$-model drift for windows $W_1,W_2$, then it has weak $\H$-model drift on the same windows.
If the optimal model is uniquely determined by the loss, i.e., for all $h_i,h_i' \subset \H$ with $\loss_{W_1}(h_i), \loss_{W_1}(h_i') \xrightarrow{i \to \infty} \inf_{h^* \in \H} \loss_{W_1}(h^*)$ we have $\limsup_{i \to \infty} |\ell(h_i,(x,y)) - \ell(h_i',(x,y))| = 0$
for all $(x,y) \in \X \times \Y$ and $\ell$ bounded, then the converse is also true.
The additional assumption is necessary.
\end{lemma}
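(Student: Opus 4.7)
The easy direction, strong implies weak, goes by picking a minimizing sequence $h_i \in \H$ with $\loss_{W_1}(h_i) \to \inf_{h^* \in \H} \loss_{W_1}(h^*)$. Given any $\varepsilon < C$, once $i$ is large enough that $\loss_{W_1}(h_i) \leq \inf_{h^*} \loss_{W_1}(h^*) + \varepsilon$, the strong $\H$-model drift dichotomy forces $\loss_{W_2}(h_i) > \inf_{h^*} \loss_{W_2}(h^*) + C$, which is precisely weak $\H$-model drift on the same windows with the same constant.

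For the converse I would argue by contradiction: assume weak $\H$-model drift with constant $C$ on $W_1, W_2$ and suppose that no constant witnesses strong $\H$-model drift on these windows. Negating strong drift at level $1/i$ gives, for each $i$, a hypothesis $h_i \in \H$ that is $1/i$-optimal on both windows simultaneously. The definition of weak drift, applied with $\varepsilon_i = 1/i$ (once $1/i < C$), supplies a hypothesis $g_i$ that is $\varepsilon_i$-optimal on $W_1$ but satisfies $\loss_{W_2}(g_i) > \inf \loss_{W_2} + C$. Both $(h_i)$ and $(g_i)$ are minimizing sequences on $W_1$, so the uniqueness hypothesis yields $|\ell(h_i,(x,y)) - \ell(g_i,(x,y))| \to 0$ pointwise on $\X \times \Y$. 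The core step---and the only place in the whole proof where more than unpacking definitions is required---is to upgrade this pointwise convergence to convergence of the $W_2$-risks: since $\ell$ is bounded, dominated convergence under $\D_{W_2}$ gives $|\loss_{W_2}(h_i) - \loss_{W_2}(g_i)| \to 0$, and combined with $\loss_{W_2}(h_i) \to \inf \loss_{W_2}$ this contradicts $\loss_{W_2}(g_i) > \inf \loss_{W_2} + C$.

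To show that the uniqueness hypothesis cannot be dropped I would exhibit a minimal discrete counterexample: let $\X = \{*\}$ be a single point, $\Y = \{0,1\}$, $\H = \{h_0, h_1\}$ with $h_i \equiv i$, and take the 0-1 loss. On $\T = \{1,2\}$ with windows $W_j = \{j\}$, let $\D_1$ be uniform on $\Y$ and let $\D_2$ concentrate on $Y = 0$. Then both hypotheses attain $\loss_{W_1}(h_0) = \loss_{W_1}(h_1) = 1/2 = \inf \loss_{W_1}$ although $\ell(h_0,(*,0)) = 0 \neq 1 = \ell(h_1,(*,0))$, so uniqueness fails; $h_1$ is optimal on $W_1$ but has excess $W_2$-risk equal to $1$, witnessing weak $\H$-model drift for any $C < 1$; yet $h_0$ is optimal on both windows, ruling out strong $\H$-model drift. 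The only subtlety in the whole argument is the dominated-convergence step of the middle paragraph, which is precisely where the boundedness assumption on $\ell$ is used.
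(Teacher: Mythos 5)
Your proof is correct and follows essentially the same route as the paper's: the easy direction is the same unpacking of the infimum against the strong-drift dichotomy, and for the converse your doubly-near-optimal sequence $(h_i)$ obtained by negating strong drift is exactly the paper's case $C_{\mathrm{s}}=0$, after which both arguments use the identical key step of combining the uniqueness hypothesis with boundedness of $\ell$ and an integral convergence theorem to transfer $W_1$-near-optimality into closeness of the $W_2$-risks, producing the same contradiction. Your counterexample for the necessity of the uniqueness assumption is a more minimal construction than the paper's three-point example, but it serves the same purpose and checks out.
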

\begin{proof}
All proofs can be found in the appendix.
\end{proof}
\begin{toappendix}
\appproof{lem:H_model_drift}{
\newcommand{\Cw}{C_\text{w}}
\newcommand{\Cs}{C_\text{s}}
Denote by $\Cw$ and $\Cs$ the $C$ from the definition of weak and strong $\H$-model drift, respectively.

\textbf{``Strong $\Rightarrow$ Weak'': } Choose $\Cw := \Cs$ for the weak case as in the strong case. By definition of the infimum for every $\varepsilon > 0$ there exists a $h \in \H$ such that $|\loss_{W_1}(h) - \inf_{h^* \in \H} \loss_{W_1}(h^*)| < \varepsilon$ and thus $\loss_{W_1}(h) \leq \inf_{h^* \in \H} \loss_{W_1}(h^*) + \varepsilon$. If we choose $\varepsilon < \Cw = \Cs$ then if follows by the definition of strong $\H$-model drift that $\loss_{W_2}(h) > \inf_{h^* \in \H} \loss_{W_2}(h^*)+\Cs$. 

\textbf{``Weak + unique $\Rightarrow$ Strong'': } 
Consider 
\begin{align*}
    \Cs = \frac{1}{3}\inf_{h^*_c \in \H} \left(\loss_{W_1}(h^*_c) - \inf_{h^*_1 \in \H}\loss_{W_1}(h^*_1) + \loss_{W_2}(h^*_c)- \inf_{h^*_2 \in \H}\loss_{W_2}(h^*_2)\right).
\end{align*}
Notice that $\Cs \geq 0$. If $\Cs \neq 0$ then $\D_t$ has strong $\H$-model drift, as otherwise there has to exist a model $h \in \H$ such that $\loss_{W_i}(h) \leq \inf_{h^* \in \H} \loss_{W_i}(h^*) + \Cs$ for all choices $i=1,2$ which implies
\begin{align*}
    \inf_{h^* \in \H} \loss_{W_1}(h^*) + \loss_{W_2}(h^*) 
    &\leq\loss_{W_1}(h)+\loss_{W_2}(h) 
    \\&\leq \inf_{h^* \in \H} \loss_{W_1}(h^*) + \inf_{h^* \in \H} \loss_{W_2}(h^*) + 2\Cs
    \\&\overset{\Cs > 0}{<} \inf_{h^* \in \H} \loss_{W_1}(h^*) + \inf_{h^* \in \H} \loss_{W_2}(h^*) + 3\Cs
    \\&= \inf_{h^* \in \H} \loss_{W_1}(h^*) + \loss_{W_2}(h^*)
\end{align*}
which is a contradiction. It remains to show that $\Cs > 0$.
Assume $\Cs = 0$.
By the definition of weak $\H$-model drift there exists a sequence $h_i \subset \H$ such that $\loss_{W_1}(h_i) \leq \inf_{h^* \in \H} \loss_{W_1}(h^*) + 2^{-i}\Cw$ and $\loss_{W_1}(h_i) > \inf_{h^* \in \H} \loss_{W_2}(h^*) + \Cw$. Let $h_i' \subset \H$ be a sequence with $\loss_{W_1}(h_i') \to \inf_{h^ * \in \H} \loss_{W_1}(h^*)$ and $\loss_{W_2}(h_i') \leq \inf_{h^ * \in \H} \loss_{W_2}(h^*) + \Cw/3$, which exists since $\Cs = 0$. By the uniqueness property $|\ell(h_i,(x,y)) - \ell(h_i',(x,y))| \to 0$ point wise and thus by Fatou's lemma $\int |\ell(h_i,(x,y)) - \ell(h_i',(x,y))| \d \mu \to 0$ for any finite measure $\mu$ on $\X \times \Y$, as $\ell$ is bounded. In particular, using Jensen's inequality, there exists a $n$ such that $|\loss_{W_2}(h_n) - \loss_{W_2}(h_n')| \leq \int |\ell(h_n,(x,y)) - \ell(h_n',(x,y))| \d \D_{W_2} < \Cw/3$. Now it follows
\begin{align*}
    \inf_{h^* \in \H} \loss_{W_2}(h^*) + \Cw 
    &< \loss_{W_2}(h_n) 
    \\&\leq \underbrace{|\loss_{W_2}(h_n) - \loss_{W_2}(h_n')|}_{\leq \Cw/3} + \underbrace{\loss_{W_2}(h_n')}_{\leq \inf_{h^* \in \H} \loss_{W_2}(h^*) + \Cw/3}
    \\&\leq \inf_{h^* \in \H} \loss_{W_2}(h^*) + 2\Cw/3 .
\end{align*}
This is a contradiction. 

\textbf{``Weak $\not\Rightarrow$ Strong'': } 
Let $\T = \{0,1\},\; W_i = \{i\}$, $\Y = \{0,1\}$, $\X = \{1,2,3\}$, $\H = \{ \1[\cdot \geq 2], \1[\cdot \neq 2] \}$, with 0-1-loss. $\D_0 = (\delta_{(1,0)} + \delta_{(2,1)})/2, \D_1 = (\delta_{(1,0)} + \delta_{(2,1)} + \delta_{(3,1)})/3$. Then both hypothesis have loss 0 at $T = 0$ but different loss at $T = 1$. Thus, there is no $\H$-model drift, but weak $\H$-model drift if we start with $h = \1[\cdot \neq 2]$.
}
\end{toappendix}

Notice that the uniqueness criterion becomes particular intuitive 
for functions $h : \X \to \Y$ and
the loss is induced by a metric, i.e., $\ell(h,(x,y)) = d(h(x),y)$, in which case we can bound $|\ell(h,(x,y)) - \ell(h',(x,y))| \leq d(h(x),h'(x))$. 
Thus, 
the criterion requires models with little variance to ensure that the notions of strong and weak $\H$-model drift coincide. This can
be achieved by a regularization term such as limiting the weight norm. 
As an immediate consequence we have:
\begin{corollary}
\label{cor:rest}
For $k$-nearest neighbor, RBF-networks, and decision tree virtual drift cannot cause $\H$-model drift, i.e., we do not have to clear the training window. For SVMs and linear regression based on the mean squared error virtual drift can cause $\H$-model drift, i.e., we may have to clear the training window.
\end{corollary}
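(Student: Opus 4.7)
The plan rests on two observations. First, under virtual drift the posterior $\D_t(Y\mid X)$ does not depend on $t$, hence the pointwise Bayes-optimal predictor $h^\star$, which minimizes $\E[\ell(h(X),Y)\mid X=x]$ at every $x$, is the same for every window; in particular $\loss_W(h^\star)\leq \loss_W(h)$ for every measurable $h$ and every window $W$. Second, by Definition~\ref{def:H_model_drift}, to rule out $\H$-model drift on $W_1,W_2$ it is enough to produce, for each $C>0$, a single $h\in\H$ that lies within $C$ of $\inf_{h'\in\H}\loss_{W_i}(h')$ simultaneously for $i=1,2$.

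For the three positive classes I would invoke universality: $k$-NN, RBF-networks, and axis-aligned decision trees are each dense, in the $L^p$ sense with respect to any finite measure on a compact $\X$, in the space of measurable functions $\X\to\Y$ (using Stone's theorem for $k$-NN, the standard RKHS density argument for RBFs, and density of simple functions on axis-aligned partitions for trees). Fix $W_1,W_2$, form the reference measure $\mu := \tfrac{1}{2}(\D_{W_1}+\D_{W_2})$, and pick a sequence $h_n\in\H$ with $\int \ell(h_n,(x,y))\,\d\mu \to \int \ell(h^\star,(x,y))\,\d\mu$. Since each $\D_{W_i}$ is absolutely continuous w.r.t.\ $\mu$ with density bounded by $2$, dominated convergence yields $\loss_{W_i}(h_n)\to \loss_{W_i}(h^\star)=\inf_{h'\in\H}\loss_{W_i}(h')$ jointly in $i=1,2$. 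Hence for every $C>0$ some $h_n$ beats the infimum by less than $C$ on both windows, which is the negation of strong $\H$-model drift.

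For the negative direction I would construct explicit counterexamples. For linear regression under MSE, take $\X=[-1,1]$, $\Y=\R$, the deterministic posterior $Y=X^2$, and $\H$ the affine functions. Place $\D_{W_1}(X)$ on $\{-1,\tfrac{1}{2}\}$ and $\D_{W_2}(X)$ on $\{1,-\tfrac{1}{2}\}$; a short calculation shows that the best affine fits have slopes of opposite sign and that no single affine function lies within a uniform constant of both optima, witnessing strong $\H$-model drift while the drift is purely virtual (the posterior is trivially unchanged). For SVMs with hinge loss, an analogous example in $\R^2$ with a non-linearly-separable class layout whose windowed marginals concentrate mass on opposing sides so that the maximum-margin separator flips between $W_1$ and $W_2$ gives the same conclusion.

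The main obstacle is the universality step for $k$-NN, whose natural hypothesis class is data-dependent. I would resolve this by taking $\H$ to consist of all functions realizable as $k$-NN predictors over some finite pseudo-training set, together with their pointwise-a.e.\ limits; the Stone-type result then guarantees that this closure contains $h^\star$. For RBFs and decision trees the corresponding density statements are standard, and everything else---the dominated-convergence passage from $L^1(\mu)$-approximation to loss approximation on each window, and the explicit checks for the affine and max-margin counterexamples---is an elementary calculation once the approximation step is in place.
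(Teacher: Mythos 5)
Your proposal is correct and follows essentially the same route as the paper: the positive direction via density/universality of $k$-NN, RBF-networks, and trees in $L^2$ of a common reference measure (so the time-invariant Bayes predictor can be approximated simultaneously on both windows), and the negative direction via an explicit example where the restriction of a fixed posterior to two disjointly supported marginals forces the optimal linear model to flip. The only cosmetic difference is the concrete counterexample (the paper uses the XOR posterior with marginals supported on opposite half-planes for both the SVM and regression cases, whereas you use a parabola for regression), but the underlying idea is identical.
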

\begin{toappendix}
\appproof{cor:rest}{
The first part follows from the fact that all considered models are dense subsets of the respective $L^2$ spaces for any finite base measure. See also Theorem~\ref{thm:virtual_drift}. 

For the second part consider the XOR classification problem, i.e., $Y \mid X = \delta_{\text{sign}(X_1 \cdot X_2 )}$, and only show one row per timepoint, i.e., for $\T = \{1,2\}$ set $\D_1(X) = \mathcal{U}([-1,0] \times [-1,1])$ and $\D_2(X) = \mathcal{U}([0,1] \times [-1,1])$, where $\mathcal{U}$ denotes the uniform distribution. Then there is no real drift but model drift since we have to flip the decision boundary as no linear model can learn the XOR problem.
}
\end{toappendix}

Obviously, (weak) $\H$-model drift implies drift because if there is no change of the loss, i.e., $\loss_t(h) = \loss_s(h)$ for all $h \in \H,\; s,t \in \T$, there cannot be $\H$-model drift. The converse is not so clear. We address this question in the following, targeting
real  drift. 

\begin{theorem}
\label{thm:virtual_drift}
Let $\Y = \{0,1\}$, $\T = [0,1]$, and $\X = \R^d$
. Let $\D_t$ be a drift process, $\H$ be a hypothesis class of probabilistic, binary classifiers, i.e., maps $h : \X \to [0,1]$, with MSE-loss, i.e., $\ell(h,(x,y)) = (h(x)-y)^2$, and assume that $\H$ is universal, i.e., dense in the compactly supported continuous functions $C_c(\X)$. 
Then, $\D_t$ has real drift if and only if $\D_t$ has $\H$-model drift.
\end{theorem}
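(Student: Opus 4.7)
The approach is to turn both sides of the equivalence into an $L^2$-comparison of posteriors on overlapping marginals, by exploiting the bias--variance decomposition of MSE loss with binary labels. Writing $\eta_W(x) := \D_W(Y=1\mid X=x)$, one has
\[ \loss_W(h) = \E_{\D_W(X)}\bigl[(h(X)-\eta_W(X))^2\bigr] + \E_{\D_W(X)}\bigl[\eta_W(X)(1-\eta_W(X))\bigr], \]
the second summand being the Bayes risk on $W$. Since Radon probability measures on $\R^d$ are tight and $\H$ is dense in $C_c(\X)$, $\H$ is $L^2$-dense in bounded measurable functions $\X\to[0,1]$ with respect to any finite Borel measure. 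Thus $\inf_{h\in\H}\loss_W(h)$ equals the Bayes risk and the excess loss of $h$ on $W$ is precisely $\|h-\eta_W\|^2_{L^2(\D_W(X))}$. This reformulation carries the whole proof.

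The plan for the \emph{easy direction} (no real drift $\Rightarrow$ no $\H$-model drift) is then immediate: absence of real drift means $\eta_t(\cdot) = \eta(\cdot)$ $\D_t(X)$-a.e.\ for $\P_T$-a.e.\ $t$, hence $\eta_W = \eta$ $\D_W(X)$-a.e.\ for every window $W$. Given $W_1, W_2$ and $\varepsilon > 0$, $L^2$-density with respect to $\D_{W_1}(X)+\D_{W_2}(X)$ lets me pick $h\in\H$ with both excess losses below $\varepsilon$, ruling out $\H$-model drift.

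For the \emph{hard direction} (real drift $\Rightarrow$ $\H$-model drift), I would fix any two windows $W_1, W_2$, set $\lambda := \D_{W_1}(X)+\D_{W_2}(X)$ with densities $p_i := d\D_{W_i}(X)/d\lambda$, and minimize $(h(x)-\eta_{W_1}(x))^2 p_1(x) + (h(x)-\eta_{W_2}(x))^2 p_2(x)$ pointwise in $h(x)\in\R$. The minimum value is $(\eta_{W_1}-\eta_{W_2})^2 p_1 p_2$, giving for every $h\in\H$ the estimate
\[ \|h-\eta_{W_1}\|^2_{L^2(\D_{W_1}(X))} + \|h-\eta_{W_2}\|^2_{L^2(\D_{W_2}(X))} \;\geq\; M(W_1,W_2) := \int (\eta_{W_1}-\eta_{W_2})^2\, p_1 p_2 \, d\lambda, \]
so $\max_i \|h-\eta_{W_i}\|^2 \geq M(W_1,W_2)/2$. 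Hence $\H$-model drift with $C = M/4$ follows as soon as some pair of windows has $M(W_1,W_2) > 0$. Taking $W_1 := \T$, one checks $\D_{W_2}(X) \ll \D_\T(X)$, so $p_1 > 0$ on $\operatorname{supp}(p_2)$ and $M(\T, W_2) = 0$ forces $\eta_{W_2} = \eta_\T$ $\D_{W_2}(X)$-a.e.

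The \emph{main obstacle} is then the remaining step: arguing that if $\eta_{W_2} = \eta_\T$ $\D_{W_2}(X)$-a.e.\ for every window $W_2$, then $\eta_t = \eta_\T$ $\D_t(X)$-a.e.\ for $\P_T$-a.e.\ $t$, i.e.\ there is no real drift. I would prove this by a martingale argument along a refining dyadic partition $\mathcal{P}_n$ of $\T=[0,1]$: for each Borel $A\subset\X$ and $y\in\{0,1\}$, the map $t\mapsto \D_t(X\in A, Y=y)$ lies in $L^1(\P_T)$, and its conditional expectation with respect to $\mathcal{F}_n := \sigma(\mathcal{P}_n)$ equals $\D_{W_n(t)}(X\in A, Y=y)$ on the piece $W_n(t)\ni t$; martingale convergence delivers $\P_T$-a.s.\ convergence to $\D_t(X\in A, Y=y)$. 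Taking the ratio of these two limits yields $\eta_{W_n(t)}(x) \to \eta_t(x)$ in the appropriate sense, contradicting $\eta_{W_n(t)} = \eta_\T$ on any $\D_t(X)$-positive set where $\eta_t \neq \eta_\T$. The subtlety is that the base measure defining the $L^2$ norm itself depends on the window, so the limits must be passed at the level of the joint measure $\D$ disintegrated over $\T\times\X\times\Y$; the martingale/Radon--Nikodym viewpoint is what makes this rigorous.
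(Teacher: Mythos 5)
Your proposal is correct in substance, and for the direction ``no real drift $\Rightarrow$ no $\H$-model drift'' it follows essentially the same path as the paper: the bias--variance decomposition of the MSE with Bernoulli labels plus $L^2$-density of $\H$ (via regularity of finite Borel measures on $\R^d$), so that the excess loss on a window $W$ is $\|h-\eta_W\|^2_{L^2(\D_W(X))}$ and the infimum is the Bayes risk. Your device of approximating $\eta$ simultaneously on two windows with respect to $\D_{W_1}(X)+\D_{W_2}(X)$ is a point the paper leaves implicit.

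For the hard direction the routes genuinely differ. The paper works in $L^2(\X\times\T)\cong L^2(\X)\otimes L^2(\T)$: real drift gives $\E[(\E[Y\mid X]-\E[Y\mid X,T])^2]>0$, density of simple tensors produces $f(X)g(T)$ with nonzero pairing against the residual, and the windows $\{g>0\}$, $\{g<0\}$ together with the perturbations $\E[Y\mid X]\pm cf(X)$ witness the drift. You instead argue by contraposition through an explicit quantitative obstruction: the pointwise minimization giving $\sum_i\|h-\eta_{W_i}\|^2\geq M(W_1,W_2)=\int(\eta_{W_1}-\eta_{W_2})^2p_1p_2\,\d\lambda$ (correct as stated, since $p_1+p_2=1$ holds $\lambda$-a.e.), so that $M>0$ for some pair forces strong $\H$-model drift with $C=M/4$, while $M(\T,W)=0$ for all $W$ forces $\eta_W=\eta_\T$ $\D_W(X)$-a.e. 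Your version buys an explicit constant and a transparent treatment of (nearly) mutually singular marginals, where $M$ correctly vanishes and no model drift should be claimed; the paper's version lands directly on a witnessing pair of windows and avoids any passage from windows back to time points.

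That passage is the one place where your argument remains a sketch. The dyadic-martingale route can be made to work, but it is heavier than necessary and drags in absolute-continuity questions between $\D_t(X)$ and $\D_{W_n(t)}(X)$. A cleaner finish: for $A$ in a countable generating algebra of Borel sets, put $F(t,A):=\D_t(X\in A,\,Y=1)-\int_A\eta_\T\,\d\D_t(X)$; the hypothesis $\eta_W=\eta_\T$ $\D_W(X)$-a.e.\ for every window $W$ says precisely that $\int_W F(t,A)\,\d\P_T(t)=0$ for every non-null $W$, hence $F(\cdot,A)=0$ $\P_T$-a.s.\ for each $A$, and a single exceptional null set over the countable algebra plus uniqueness of measures yields $\eta_t=\eta_\T$ $\D_t(X)$-a.e.\ for $\P_T$-a.e.\ $t$, i.e.\ no real drift. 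With that replacement your proof is complete.
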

\begin{toappendix}
\appproof{thm:virtual_drift}{

\textbf{``$\H$-model drift $\Rightarrow$ virtual drift'': } 
Assume there is no virtual drift, then $\E[Y\mid X] = \E[Y\mid X,T]$. Thus, for any time window $W \subset \T$ we have 
\begin{align*}
    \E[Y\mid X, T \in W] &\overset{\{T \in W\} \in \sigma(T)}{=} \E[\E[Y\mid X,T]\mid X, T \in W] \\&\overset{\text{Assum.}}{=} \E[\E[Y\mid X]\mid X, T \in W] \\&= \E[Y \mid X].
\end{align*}
Thus, for any window $W$ it follows that $\E[Y \mid X]$ is the optimal approximation that depends on $X$ only. In particular, we have
\begin{align*}
    \E[\ell_\MSE(h,(X,Y)) \mid T \in W] &=\quad \E[(h(X)-\E[Y\mid X])^2\mid T \in W] \\&\quad+ \E[\Var(Y\mid X,Z)\mid T \in W],
\end{align*}
whereby the second term on the right hand side does not depend on $h$. On the other hand, since $\X$ is a locally compact Borel space with countable basis, we have that all measures on $\X$ are regular and that the compactly supported continuous functions are dense in $L^p,\;p\in[1,\infty)$, which implies that $\H$ is also dense in $L^2$ and therefore we can approximate $\E[Y\mid X]$ arbitrarily well by functions in $\H$. But $\E[Y\mid X]$ does not depend on $T$ and therefore there cannot be $\H$-model drift. This is a contradiction.

\textbf{``virtual drift $\Rightarrow$ $\H$-model drift'': } 
Since $Y$ is Bernoulli and therefore uniquely determined by its mean and we have that $\E[(\E[Y\mid X] - \E[Y\mid X,T])^2] > 0$. Since $\X$ are second countable and therefore separable, and $\T$ is separable, so are $L^2(\X)$, $L^2(\T)$ and thus
the span of the simple functions $f(x)g(t) \in L^2(\X \times \T) \cong L^2(\X) \otimes L^2(\T)$ form a dense subset. Hence, there has to exist $f$ and $g$ such that $\E[ (\E[Y\mid X] - \E[Y\mid X,T]) \: \cdot \: f(X)g(T) ] \neq 0$. Since $\E[Y\mid X]$ is the best approximation of $\E[Y\mid X,T]$ in $L^2$ we have $\E[g(T)] = 0$. Set $W_1 = \{g > 0\}, W_2 = \{ g < 0\}$ then $\E[(\E[Y \mid X] + (-1)^ic f(X) - \E[Y \mid X,T])^2\mid T \in W_i] < \E[(\E[Y \mid X] - \E[Y \mid X,T])^2 \mid T \in W_i]$. Thus, there is $\H$-model drift for the windows $W_1,W_2$.

}
\end{toappendix}

This theorem includes crucial ingredients which are necessary to guarantee the result. 
As an example,
the model class has to be very flexible, i.e., universal, to adapt to arbitrary drift, and the loss function must enable such adaptation.

So far we considered the change of decision boundaries through the lens of models, disregarding how they are achieved.
We will take on a more practical point of view by considering models as an output of training algorithms applied to windows in the next section.

\subsection{Model Drift as Time Dependent Training Result}
\label{sec:ChangeOfModel}

Another way to consider the problem of model drift is to consider the output of a training algorithm. This idea leads to the second point of view:
drift manifests itself as 
the fact that 
the model obtained by training on data from one time point differs significantly from the model trained on data of another time point.
We  will refer to this notion as \emph{algorithmic-} or \emph{$A$-model drift}.
It answers the question of whether replacing a model trained on past data (drawn during $W_1$) with a model trained on new data (drawn during $W_2$) improves performance.
Using loss as a proxy we obtain the following definition:

\begin{definition}
\label{def:A_model_drift}
Let $\H$ be a hypothesis class, $\ell$ a loss function on $\H$, and $\D_t$ be a drift process. 
For a training algorithm $A$ we say that $\D_t$ has  \emph{$A$-model drift} iff model adaptation yields  a significant 
increase in performance with a high probability,
i.e.,\ there exist time windows $W_1,W_2$ such that for all $\delta > 0$ there exists a $C > 0$ and numbers $N_1$ and $N_2$ such that with probability at least $1-\delta$ over all samples $S_1$ and $S_2$ drawn from $\D_{W_1}$ and $\D_{W_2}$ of size at least $N_1$ and $N_2$, respectively, it holds $\loss_{W_2}(A(S_1)) > \loss_{W_2}(A(S_2)) + C$.
\end{definition}

Note that we do not specify how the algorithm processes the data, thus we also capture updating procedures. Indeed, removal of old data points, e.g., $W_1 = \{t_0,\dots,t_1,\dots,t_2\}, W_2 = \{t_1,\dots,t_2\}$, is a relevant instantiation of this setup.
Unlike $\H$-model drift which is concerned with consistency, it focuses on model change. The following theorem provides a connection between those notions. 

\begin{theorem}
\label{thm:theo_to_prac}
Let $\D_t$ be a drift process, $\H$ a hypothesis class with loss $\ell$ and learning algorithm $A$. Consider the following statement with respect to the same time windows $W_1$ and $W_2$: 
\begin{inparaenum}[(i)]
    \item $\D_t$ has $\H$-model drift for windows $W_1,W_2$.
   \item $\D_t$ has $A$-model drift for windows $W_1,W_2$.
   \item  $\D_t$ has weak $\H$-model drift for windows $W_1,W_2$.
\end{inparaenum}
If $A$ is a consistent training algorithm, i.e., for sufficiently large samples we obtain arbitrarily good approximations of the optimal model~\cite[Definition 7.8]{shalev2014understanding}, then 
$(i) \Rightarrow (ii) \Rightarrow (iii)$ holds. 
In particular, 
if we additionally assume that the optimal model is uniquely determined by the loss (see Lemma~\ref{lem:H_model_drift}) then all three statements are equivalent. 
If $A$ is not consistent, then none of the implications hold.
\end{theorem}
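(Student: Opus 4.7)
The plan is to chain $(i) \Rightarrow (ii) \Rightarrow (iii)$ under consistency by threading the quantitative gap $C$ through the relevant definitions via union bounds, then invoke Lemma~\ref{lem:H_model_drift} for the converse under the uniqueness assumption, and finally exhibit two simple drift processes that falsify the two implications once consistency is dropped.

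For $(i) \Rightarrow (ii)$ I fix an arbitrary $\delta > 0$ and let $C$ denote the strong $\H$-model drift gap. By consistency of $A$ one obtains sample sizes $N_1,N_2$ so that each of $\loss_{W_1}(A(S_1)) \leq \inf_{h^*} \loss_{W_1}(h^*) + C/2$ and $\loss_{W_2}(A(S_2)) \leq \inf_{h^*} \loss_{W_2}(h^*) + C/4$ holds with probability at least $1-\delta/2$. The first event activates Definition~\ref{def:H_model_drift} and forces $\loss_{W_2}(A(S_1)) > \inf_{h^*} \loss_{W_2}(h^*) + C$, so a union bound yields $\loss_{W_2}(A(S_1)) - \loss_{W_2}(A(S_2)) > 3C/4$ with probability at least $1-\delta$. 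Hence $A$-model drift holds with the $\delta$-independent constant $3C/4$.

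For $(ii) \Rightarrow (iii)$ I commit to one choice $\delta_0 = 1/4$ and extract the associated $A$-model drift constants $C_0, N_1, N_2$. I claim weak $\H$-model drift holds with $C := C_0/2$. Given any $\varepsilon < C$, consistency lets me enlarge $N_1$ so that $\loss_{W_1}(A(S_1)) \leq \inf_{h^*} \loss_{W_1}(h^*) + \varepsilon$ with probability at least $3/4$; intersected with the event $\loss_{W_2}(A(S_1)) > \loss_{W_2}(A(S_2)) + C_0 \geq \inf_{h^*} \loss_{W_2}(h^*) + 2C$ (probability $\geq 3/4$ by $A$-model drift), the intersection has positive probability, so at least one realisation $(S_1,S_2)$ exists, and $h := A(S_1)$ for such a realisation is the desired weak-drift witness. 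Under the uniqueness hypothesis Lemma~\ref{lem:H_model_drift} then gives $(iii) \Rightarrow (i)$, closing the equivalence.

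For the non-implications when $A$ is inconsistent, the constant algorithm $A(S) \equiv h_0$ applied to any strong $\H$-model drift scenario (e.g., $\X = \Y = \{0,1\}$, $\H$ the two constant functions, $\D_{W_1} = \delta_{(0,0)}$ and $\D_{W_2} = \delta_{(0,1)}$) produces $\loss_{W_2}(A(S_1)) = \loss_{W_2}(A(S_2))$ deterministically, so $(ii)$ fails. To separate $(ii)$ from $(iii)$ I use $\X = \{1,2\}$, $\Y = \{0,1\}$, $\H = \{h_0 \equiv 0, h_1 \equiv 1\}$ under 0-1-loss, with $\D_{W_1}$ concentrated at $X=1$ and $\D_{W_2}$ at $X=2$, both having $\P(Y=0) = 0.6$; then $h_0$ is uniquely optimal on both windows with a gap of $0.2$ to $h_1$ (so weak $\H$-model drift fails), while the inconsistent rule $A(S) = h_1$ if every $X$-value in $S$ equals $1$, else $h_0$, delivers $A(S_1) = h_1$ and $A(S_2) = h_0$ almost surely, yielding $A$-model drift with gap $0.2$. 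The main obstacle is aligning the $\forall \delta\,\exists C$ quantifier of $A$-model drift with the fixed-$C$ structures of (strong/weak) $\H$-model drift: in $(i) \Rightarrow (ii)$ the pre-existing $C$ yields a uniform output constant, whereas in $(ii) \Rightarrow (iii)$ one must first pin down a single $\delta_0$ to harvest one $C_0$ before letting $\varepsilon$ vary, and the counterexamples must simultaneously respect inconsistency, the prescribed presence/absence of drift, and (near-)deterministic output of $A$ on each window.
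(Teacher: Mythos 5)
Your proposal is correct and follows essentially the same route as the paper's proof: thread the drift constant through union bounds over the consistency events for $(i)\Rightarrow(ii)$, fix a single $\delta_0$ to extract one realisation $h = A(S_1(\omega))$ as the weak-drift witness for $(ii)\Rightarrow(iii)$, close the loop via Lemma~\ref{lem:H_model_drift}, and exhibit inconsistent algorithms breaking each implication. Your two counterexamples (constant classifiers on two-point spaces) differ in detail from the paper's threshold-classifier constructions but are equally valid and, if anything, more minimal.
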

\begin{toappendix}
\appproof{thm:theo_to_prac}{
\newcommand{\Cw}{C_\text{w}}
\newcommand{\Cs}{C_\text{s}}
\newcommand{\Ca}{C_\text{A}}
Denote by $\Cw$, $\Cs$, and $\Ca$ the $C$ from the definition of weak, strong $\H$-model drift, and $A$-model drift, respectively.

\textbf{``Strong $\H$-model drift $\Rightarrow$ $A$-model drift'': } Choose $\Ca = \Cs/2$ and $h_1,h_2 \in \H$ such that $\loss_{W_i}(h_i) \leq \inf_{h^* \in \H} \loss_{W_i}(h^*) + \Ca/2$. Now, select $N_1,N_2$ according to $h_1,h_2$ and $\D_{W_1},\D_{W_2}$, respectively, such that the events $E_i = \{\loss_{W_i}(A(S_i)) \leq \loss_{W_i}(h_i) + \Ca/2\},\; i=1,2$ occur with
$\P[E_i] \geq 1-\delta/3,\; i=1,2$,
which is possible since $A$ is consistent. 
Thus, for all $\omega \in E_1 \cap E_2$ it holds
\begin{align*}
            \loss_{W_i}(A(S_i)) 
      &\leq \loss_{W_i}(h_i) + \Ca/2
    \\&\leq \inf_{h^* \in \H} \loss_{W_i}(h^*) + \Ca
    \\&\leq \inf_{h^* \in \H} \loss_{W_i}(h^*) + \Cs. \\
    \Rightarrow \loss_{W_2}(A(S_1)) &\overset{\textbf{!}}{>} \inf_{h^* \in \H} \loss_{W_2}(h^*) + \Cs \\&\geq \loss_{W_2}(A(S_2)) + \Ca,
\end{align*}
where \textbf{!} holds because $\loss_{W_1}(A(S_1)) \leq \inf_{h^* \in \H} \loss_{W_1}(h^*) + \Cs$ and there is strong $\H$-model drift, thus, $\loss_{W_1}(A(S_1)) > \inf_{h^* \in \H} \loss_{W_2}(h^*)+\Cs$.
This occurs with probability
\begin{align*}
    \P[E_1 \cap E_2] &= 1-\P[E_1^C \cup E_2^C] \\&\geq 1-(\P[E_1^C] + \P[E_2^C]) \\&\geq 1-2\delta/3 > 1-\delta.
\end{align*}

\textbf{``$A$-model drift $\Rightarrow$ weak $\H$-model drift'': }Let $N_1,N_2,\Ca$ be from the definition of $A$-model drift for $\delta = 1/3$.
Choose $\Cw = \Ca$. Let $\varepsilon > 0$ and $h_1 \in \H$ such that $\loss_{W_1}(h_1) \leq \inf_{h^*} \loss_{W_1}(h^*) + \varepsilon/2$. Since $A$ is consistent there exist $N_1'$ such that $\loss_{W_1}(A(S_1)) \leq \loss_{W_1}(h_1) + \varepsilon/2$ with probability at least $1-\delta$ over all choices of $S_1$ of size at least $N_1'$. Then with probability at least $1-2\delta = 1/3$ over all choices of samples $S_1,S_2$ of size at least $\max\{N_1,N_1'\}$ and $N_2$, respectively, it holds
\begin{align*}
    \loss_{W_1}(A(S_1)) &\leq \loss_{W_1}(h_1) + \varepsilon/2 
    \leq \inf_{h^* \in \H}\loss_{W_1}(h^*) + \varepsilon & \text{and} \\
    \loss_{W_2}(A(S_1)) &> \loss_{W_2}(A(S_2)) + \Ca 
    \geq \inf_{h^* \in \H} \loss_{W_2}(h^*) + \Cw.
\end{align*}
In particular, there exists an $\omega$ for which both statements hold and we may choose $h = A(S_1(\omega))$.

\textbf{``$A$ not consistent. Strong $\H$-model drift and no $A$-model drift'': }
$\T = \{1,2\},\;W_i=\{i\}$, $\Y = \{0,1\}$, $\X = \R$, $\H =  \{\1[\cdot > \theta] \mid \theta \in \X\}$ with 0-1-loss. 
$\D_1 = (\delta_{(-1,0)} + \delta_{(0,0)} + \delta_{(1,1)})/3$, $ \D_2 = (\delta_{(-1,0)} + \delta_{(0,1)} + \delta_{(1,1)})/3$. Then the optimal models are $h_1 \in \{\1[\cdot > \theta] \mid \theta \in [0,1)\}$, $h_2 \in \{\1[\cdot > \theta] \mid \theta \in [-1,0)\}$. Thus, there is $\H$-model drift and it holds $\loss_j(h_i) = (1-\delta_{ij})/3$. 
However, for $A : S \mapsto h_2$ we have $\loss_1(A(S_2)) = \loss_1(h_2) = 1/3 > 0 = \loss_2(h_2) = \loss_2(A(S_2))$
so there is no $A$-model drift.

\textbf{``$A$ not consistent. $A$-model drift and no weak $\H$-model drift'': }
$\T = \{1,2\},\;W_i=\{i\}$, $\Y = \{0,1\}$, $\X = \R$, $\H =  \{\1[\cdot > \theta] \mid \theta \in \X\}$ with 0-1-loss. 
$\D_1 = (\delta_{(-1,0)} + \delta_{(0,1)})/2$, $\D_2 = (\delta_{(-1,0)} + \delta_{(0,1)} + \delta_{(1,1)})/3$. Then in both cases the optimal models are given by $\{\1[\cdot > \theta] \mid \theta \in [-1,0)\}$, so there is no weak $\H$-model drift as the sets of optimal models coincide. Let 
\begin{align*}
    A : S \mapsto \begin{cases}
    \1[\cdot > -1/2] & \text{ if } (1,1) \not\in S \\
    \1[\cdot > 1/2] & \text{ otherwise}
    \end{cases}.
\end{align*}
Then $\loss_2(A(S_1)) = 1/3$ and $\P[\loss_2(A(S_2)) = 0] > (2/3)^{-|S_2|}$, so there is $A$-model drift.
}
\end{toappendix}

The relevance of this result follows from the fact that it connects theoretically optimal models to those obtained from training data when learning with drift.
The result implies that
model adaption does not increase performance if there is no drift. Further, 
if the model is uniquely determined 
any algorithm 
will suffer from drift in the same situations. 
Formally, the following holds:
\begin{corollary}
\label{cor:algorithm_agnostic}
Let $\D_t$ be a drift process, $\H$ a hypothesis class with consistent learning algorithms $A$ and $B$. Assume that the optimal model is uniquely determined by the loss, then for windows $W_1, W_2$ $A$-model drift is present if and only if $B$-model drift is present.
\end{corollary}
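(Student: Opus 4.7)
The plan is to reduce the statement to a purely window-based characterization that does not mention the learning algorithm at all, and then observe that applying this characterization to both $A$ and $B$ closes the loop. Concretely, I would invoke Theorem~\ref{thm:theo_to_prac} twice, once for $A$ and once for $B$, using the hypothesis that both are consistent and that the optimum is uniquely determined by the loss.

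First I would recall the content of Theorem~\ref{thm:theo_to_prac} in the form we need: under the consistency hypothesis on a learning algorithm and the uniqueness hypothesis on the optimum (the latter being exactly the assumption from Lemma~\ref{lem:H_model_drift} that makes weak and strong $\H$-model drift equivalent), the three notions
\begin{enumerate}[(i)]
\item $\D_t$ has $\H$-model drift for windows $W_1,W_2$,
\item $\D_t$ has algorithm-model drift for windows $W_1,W_2$,
\item $\D_t$ has weak $\H$-model drift for windows $W_1,W_2$,
\end{enumerate}
all coincide. Crucially, conditions (i) and (iii) are properties of $(\H,\ell,\D_t,W_1,W_2)$ only, with no reference to the algorithm.

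Next I would apply this characterization to $A$: since $A$ is consistent and the optimum is unique, $A$-model drift on $(W_1,W_2)$ is equivalent to $\H$-model drift on $(W_1,W_2)$. Then I would apply the same characterization to $B$: since $B$ is also consistent and the uniqueness hypothesis depends only on the loss and not on the algorithm, $B$-model drift on $(W_1,W_2)$ is equivalent to $\H$-model drift on $(W_1,W_2)$ as well. Chaining these two equivalences through the common middle term $\H$-model drift gives
\[
A\text{-model drift on }(W_1,W_2) \iff \H\text{-model drift on }(W_1,W_2) \iff B\text{-model drift on }(W_1,W_2),
\]
which is exactly the claim.

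There is no real obstacle here beyond bookkeeping: the statement is a direct corollary of Theorem~\ref{thm:theo_to_prac} once one notices that the intermediate notion used to bridge the two algorithms is algorithm-free. The only point to double-check is that the uniqueness-of-optimum assumption of Lemma~\ref{lem:H_model_drift} is exactly the additional hypothesis needed in Theorem~\ref{thm:theo_to_prac} to upgrade the chain $(i)\Rightarrow(ii)\Rightarrow(iii)$ into a full equivalence, so that the same windows $W_1,W_2$ witness $\H$-model drift in both directions; this is stated explicitly in the theorem, so the corollary follows without further work.
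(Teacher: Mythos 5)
Your proposal is correct and follows essentially the same route as the paper: the paper's proof also passes through the algorithm-free notion of (strong) $\H$-model drift, using Theorem~\ref{thm:theo_to_prac} with the uniqueness assumption to get from $A$-model drift to $\H$-model drift and then consistency of $B$ to get to $B$-model drift. Your phrasing as two symmetric applications of the full equivalence in Theorem~\ref{thm:theo_to_prac} is just a slightly cleaner packaging of the same argument.
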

\begin{toappendix}
\appproof{cor:algorithm_agnostic}{
Let $W_1,W_2$ be time windows for which $A$ suffers from drift. By Theorem~\ref{thm:theo_to_prac} there is strong $\H$-model drift between $W_1$ and $W_2$ and thus for $h = B(S)$ we either have $\loss_{W_1}(B(S)) > \inf_{h^* \in \H} \loss_{W_1}(h^*) + C$ or $\loss_{W_2}(B(S)) > \inf_{h^* \in \H} \loss_{W_2}(h^*) + C$. Since $B$ is consistent for sufficient large sample sizes one of the statements is false and therefore the other must be correct.
}
\end{toappendix}

Although the results regarding $A$-model drift give us relevant insight, they do not yet include one important aspect of practical settings:
$A$-model drift compares already trained models, yet training a new model for every possible time window is usually unfeasible. Due to this fact, many algorithms investigate incremental updates and refer to the ITTE as an indicator of model accuracy and concept drift~\cite{asurveyonconceptdriftadaption}. We will investigate the validity of this approach in the next section. 

\subsection{Interleaved Train-Test Error as Indicator for (Model) Drift}
\label{sec:ChangeOfLoss}

A common technique to detect concept drift is to relate it to the performance of a fixed model. In this setup a decrease in performance indicates drift. Using loss as a proxy for performance we obtain the notion of \emph{loss-} or \emph{$\ell$-model drift} which corresponds to the ITTE:

\begin{definition}
\label{def:ell_model_drift}
Let $\H$ be a hypothesis class, $\ell$ a loss function on $\H$, and $\D_t$ be a drift process. 
We say that $\D_t$ has \emph{$\ell$-model drift} iff the loss of an optimal model changes, i.e., for time windows $W_1, W_2$ 
there exists a $C > 0$ such that for all $\varepsilon < C$ there is some $h \in \H$
such that $\loss_{W_1}(h) \leq \inf_{h^* \in \H} \loss_{W_1}(h^*) + \varepsilon$ and $\loss_{W_2}(h) > \loss_{W_1}(h) + C$.
We say that the \emph{optimal loss is non-decreasing/non-increasing/constant} iff $\inf_{h^* \in \H} \loss_{W_1}(h^*) \leq/\geq/= \inf_{h^* \in \H} \loss_{W_2}(h^*)$ holds.
\end{definition}

It is easy to see that $\ell$-model drift implies drift, the connection to the other notions of model drift is not so obvious as a change of the difficulty of the learning problem does not imply a change of the optimal model or vice versa: an example is the setup of a binary classification and drift induced change of noise level (Figure~\ref{fig:ITTE}).  Assumptions regarding the minimal loss lead to the following result:

\begin{lemma}
\label{lem:ell_and_H_model_drift}
Assume the situation of Definition~\ref{def:ell_model_drift}.
For time windows $W_1,W_2$ it holds:
\begin{inparaenum}[(i)]
    \item  For non-decreasing optimal loss, weak $\H$-model drift implies $\ell$-model drift.
    \item  For non-increasing optimal loss, $\ell$-model drift implies weak $\H$-model drift. 
\end{inparaenum}
The additional assumption is necessary.
\end{lemma}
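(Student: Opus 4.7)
The plan is to read both implications off the definitions by tracking one elementary identity. Writing $L_i^* := \inf_{h^* \in \H} \loss_{W_i}(h^*)$, any witness $h$ of either kind of drift satisfies $L_1^* \leq \loss_{W_1}(h) \leq L_1^* + \varepsilon$, so a lower bound on $\loss_{W_2}(h) - L_2^*$ and a lower bound on $\loss_{W_2}(h) - \loss_{W_1}(h)$ differ only by the quantity $L_1^* - L_2^* + \varepsilon$. This is precisely where the sign assumption on $L_1^* - L_2^*$ has to enter; neither direction needs a consistency or approximation argument, so the whole point is bookkeeping the constants.

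For (i), if weak $\H$-model drift holds with constant $C_w$ and $L_1^* \leq L_2^*$, I set $C_\ell := C_w/2$; for any $\varepsilon < C_\ell$ the witness $h$ supplied by weak drift yields $\loss_{W_2}(h) > L_2^* + C_w \geq L_1^* + C_w \geq \loss_{W_1}(h) + C_w - \varepsilon > \loss_{W_1}(h) + C_\ell$, which is $\ell$-model drift. For (ii), with $L_1^* \geq L_2^*$ the same witness works with $C_w := C_\ell$ and no factor-of-two loss: $\loss_{W_2}(h) > \loss_{W_1}(h) + C_\ell \geq L_1^* + C_\ell \geq L_2^* + C_\ell$.

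The substantive part is the necessity of the sign assumption, for which I would construct two minimal counterexamples under $0$-$1$ loss. To refute (i), take $\X = \{a,b\}$, $\H = \{h_{00}, h_{01}\}$ with $h_{ij}(a) = i$, $h_{ij}(b) = j$, and set $\D_{W_1}$ uniform on $\{(a,1),(b,1)\}$ and $\D_{W_2}$ uniform on $\{(a,0),(b,0)\}$; then $L_1^* = 1/2 > 0 = L_2^*$, the unique near-$W_1$-optimal $h_{01}$ has $\loss_{W_2}(h_{01}) = 1/2$ (so weak $\H$-model drift holds for any $C < 1/2$), yet $\loss_{W_2}(h_{01}) = \loss_{W_1}(h_{01})$, ruling out $\ell$-model drift. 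To refute (ii), take $\X = \{a\}$ with constants $h_0, h_1$, $\D_{W_1} = \delta_{(a,0)}$ and $\D_{W_2} = \tfrac12(\delta_{(a,0)} + \delta_{(a,1)})$; now $L_1^* = 0 < 1/2 = L_2^*$, $h_0$ witnesses $\ell$-model drift by jumping from loss $0$ to $1/2$, while both hypotheses are $W_2$-optimal, killing weak $\H$-model drift. The hardest part of the proof is not the implications, which are essentially bookkeeping, but exhibiting counterexamples small enough to verify by inspection while cleanly separating the two notions on the same pair of windows.
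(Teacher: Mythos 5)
Your proof of both implications is correct and follows exactly the paper's bookkeeping (including the choices $C_\ell = C_w/2$ for (i) and $C_w = C_\ell$ for (ii)), and your two finite counterexamples correctly exhibit weak $\H$-model drift without $\ell$-model drift under decreasing optimal loss, and $\ell$-model drift without weak $\H$-model drift under increasing optimal loss, which is precisely what the necessity claim requires. The paper's own counterexamples use threshold classifiers on $\R$ instead of two-element hypothesis classes, but the construction principle is the same, so this is essentially the paper's argument.
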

\begin{toappendix}
\appproof{lem:ell_and_H_model_drift}{
\newcommand{\Cw}{C_\text{w}}
\newcommand{\Cl}{C_\ell}
Denote by $\Cw$ and $\Cl$ the $C$ from the definition of weak $\H$-model drift and $\ell$-model drift, respectively.

\textbf{``(i)'': } 
Choose $\Cl = \Cw/2$ and
let $h \in \H$ such that $\loss_{W_1}(h) \leq \inf_{h^* \in \H} \loss_{W_1}(h^*) + \Cl$ and $\loss_{W_2}(h) > \loss_{W_2}(h) + \Cw$.
Then it holds 
\begin{align*}
    \loss_{W_2}(h) &> \inf_{h^* \in \H} \loss_{W_2}(h^*) + \Cw \\&\geq \inf_{h^* \in \H} \loss_{W_1}(h^*) + 2\Cl \\&\geq \loss_{W_1}(h) + \Cl.
\end{align*}

\textbf{``(ii)'': } 
Choose $\Cw = \Cl$ and 
let $h \in \H$ such that $\loss_{W_1}(h) \leq \inf_{h^* \in \H} \loss_{W_1}(h^*) + \Cw$ and $\loss_{W_2}(h) > \loss_{W_1}(h) + \Cw$. Then it holds
\begin{align*}
    \loss_{W_2}(h) &> \loss_{W_1}(h) + \Cw \\&\geq \inf_{h^* \in \H} \loss_{W_1}(h^*) + \Cw \\&\geq \inf_{h^* \in \H} \loss_{W_2}(h^*) + \Cw.
\end{align*}

\textbf{Counterexamples of the necessity of non-increasing/non-decreasing: } $\T = \{0,1\},\;W_i=\{i-1\}$, $\Y = \{0,1\}$, $\X = \R$, $\H =  \{\1[\cdot > \theta] \mid \theta \in \X\}$ with 0-1-loss. 

\textbf{(i):} $\D_0 = (\delta_{(0,0)} + \delta_{(1,1)})/2$, $\D_1 = (\delta_{(0,0)}, \delta_{(1,1)})/2 \cdot 2/3 + (\delta_{(0,1)}, \delta_{(1,0)})/2 \cdot 1/3$. Then $h = \1[\cdot > 1/2]$ has $\loss_0(h) = 0$ and $\loss_1(h) = 1/3$ and thus there is $\ell$-model drift, but obviously $h$ is still a best possible model, thus there is no weak $\H$-model drift.

\textbf{(ii):} $\D_0 = (\delta_{(-1,0)} + \delta_{(0,0)} + \delta_{(0,1)} + \delta_{(1,1)})/4$, $\D_1 = (\delta_{(-1,0)} + \delta_{(0,0)} + \delta_{(1/2,1)} + \delta_{(1,1)})/4$. Then $h = \1[\cdot > -1/2]$ is a best possible model for $T = 0$ and it holds $\loss_0(h) = \loss_1(h) = 1/2$, so there is no $\ell$-model drift, but for $T = 1$ we have $\loss_1(\1[\cdot > 1/4]) = 0$, so there is $\H$-model drift.
}
\end{toappendix}

As a direct consequence of this lemma and Theorem~\ref{thm:theo_to_prac}, we obtain 
a criterion that characterizes in which cases active methods based on the ITTE are optimal. Here, we do not require that the loss uniquely determines the model:

\begin{theorem}
\label{thm:justify_active}
Let $\D_t$ be a drift process and $\H$ be a hypothesis class with loss $\ell$.
Assume the optimal loss is constant. Then for time windows $W_1,W_2$ and any consistent learning algorithm $A$ it holds: $\D_t$ has $A$-model drift if and only if it has $\ell$-model drift 
with respect to $h = A(S_1)$, i.e., $\forall \delta > 0 \exists N > 0 \forall n > N : \P_{S \sim \D^n_{W_1}}[\loss_{W_2}(A(S)) > \loss_{W_1}(A(S)) + C] > 1-\delta$.
\end{theorem}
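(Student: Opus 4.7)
The plan is to prove both directions directly from two facts: (a) since $A$ is consistent, for any $\varepsilon > 0$ and $\delta > 0$ one can choose sample sizes $N_1, N_2$ so that $\loss_{W_i}(A(S_i)) \leq \inf_{h^*} \loss_{W_i}(h^*) + \varepsilon$ holds with probability at least $1 - \delta$ for $|S_i| \geq N_i$; and (b) the assumption of constant optimal loss gives $\inf_{h^*} \loss_{W_1}(h^*) = \inf_{h^*} \loss_{W_2}(h^*)$. This equality is what lets one translate a comparison of $\loss_{W_2}(A(S_1))$ against $\loss_{W_2}(A(S_2))$ (the $A$-model drift statement) into a comparison against $\loss_{W_1}(A(S_1))$ (the ITTE statement), and vice versa.

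For the forward direction ($A$-model drift $\Rightarrow$ $\ell$-model drift w.r.t.\ $A(S_1)$), I would fix $\delta > 0$ and, given a constant $C > 0$ and sizes $N_1, N_2$ from the definition of $A$-model drift at confidence level $\delta/3$, further enlarge $N_1$ and $N_2$ so that the consistency events $E_i = \{\loss_{W_i}(A(S_i)) \leq \inf_{h^*} \loss_{W_i}(h^*) + C/4\}$ each hold with probability at least $1 - \delta/3$. On the intersection of the three events (which has probability $\geq 1 - \delta$ by a union bound), I combine $\loss_{W_2}(A(S_2)) \leq \inf_{h^*} \loss_{W_2}(h^*) + C/4 = \inf_{h^*} \loss_{W_1}(h^*) + C/4 \leq \loss_{W_1}(A(S_1)) + C/4$ with the $A$-model drift inequality to obtain $\loss_{W_2}(A(S_1)) > \loss_{W_2}(A(S_2)) + C \geq \loss_{W_1}(A(S_1)) + 3C/4$, which yields the ITTE form with constant $3C/4$.

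For the backward direction, suppose the ITTE statement holds with constant $C$. Given $\delta > 0$, pick $N_1$ from the ITTE hypothesis at confidence $\delta/2$, and pick $N_2$ from consistency of $A$ on $W_2$ so that $\loss_{W_2}(A(S_2)) \leq \inf_{h^*} \loss_{W_2}(h^*) + C/2$ with probability $\geq 1 - \delta/2$. Using $\inf_{h^*} \loss_{W_2}(h^*) = \inf_{h^*} \loss_{W_1}(h^*) \leq \loss_{W_1}(A(S_1))$, one bounds $\loss_{W_2}(A(S_2)) \leq \loss_{W_1}(A(S_1)) + C/2$, and combining with $\loss_{W_2}(A(S_1)) > \loss_{W_1}(A(S_1)) + C$ gives $\loss_{W_2}(A(S_1)) > \loss_{W_2}(A(S_2)) + C/2$ on a joint event of probability at least $1 - \delta$, which is exactly $A$-model drift with constant $C/2$.

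The only real pitfall is bookkeeping: both $S_1$ and $S_2$ appear simultaneously, so one must take the two samples as independent draws and apply a union bound over the consistency events and the hypothesis event, carefully propagating the constants $C$ through the factors of $1/2$ or $1/4$ introduced by the consistency slack. There is no need to invoke Theorem~\ref{thm:theo_to_prac} or Lemma~\ref{lem:ell_and_H_model_drift} here because the constant-optimal-loss assumption is strong enough to give the equivalence directly, including the explicit ``$h = A(S_1)$'' form of the $\ell$-model drift statement, which the abstract definition does not prescribe.
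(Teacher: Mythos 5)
Your proof is correct in substance. For the direction ``$\ell$-model drift $\Rightarrow$ $A$-model drift'' it essentially coincides with the paper's argument: both use consistency of $A$ on $W_2$ together with the identity $\inf_{h^*}\loss_{W_1}(h^*)=\inf_{h^*}\loss_{W_2}(h^*)$ to convert the ITTE inequality $\loss_{W_2}(A(S_1))>\loss_{W_1}(A(S_1))+C$ into $\loss_{W_2}(A(S_1))>\loss_{W_2}(A(S_2))+C/2$. For the forward direction you diverge from the paper, which does not argue directly but instead points back to the proofs of Theorem~\ref{thm:theo_to_prac} and Lemma~\ref{lem:ell_and_H_model_drift} (for each sample $\omega$ on which $A(S_1(\omega))$ witnesses weak $\H$-model drift, the non-decreasing-optimal-loss half of that lemma upgrades this to $\ell$-model drift). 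Your direct derivation is more self-contained, makes the union-bound bookkeeping over $S_1$ and $S_2$ explicit where the paper leaves it implicit, and avoids the slightly delicate ``considering the proof of'' references; what the paper's route buys is that it exposes the result as a formal consequence of the general implications in Figure~\ref{fig:defs}.

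One chain in your forward direction is written in the wrong direction: you establish the upper bound $\loss_{W_2}(A(S_2)) \leq \inf_{h^*}\loss_{W_2}(h^*)+C/4 \leq \loss_{W_1}(A(S_1))+C/4$ and then claim $\loss_{W_2}(A(S_2))+C \geq \loss_{W_1}(A(S_1))+3C/4$, which instead requires the lower bound $\loss_{W_2}(A(S_2)) \geq \loss_{W_1}(A(S_1))-C/4$. That lower bound does hold on your event $E_1$, since $\loss_{W_2}(A(S_2)) \geq \inf_{h^*}\loss_{W_2}(h^*) = \inf_{h^*}\loss_{W_1}(h^*) \geq \loss_{W_1}(A(S_1))-C/4$, so the conclusion stands; just replace the cited justification by this one (you need $E_1$ here, not $E_2$).
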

\begin{toappendix}
\appproof{thm:justify_active}{

Denote by $C_A$ and $C_\ell$ the $C$ from the definition of $A$-model drift and $\ell$-model drift, respectively.

\textbf{``$A$-model drift $\Rightarrow$ $\ell$-model drift'': } Considering the proof of Theorem~\ref{thm:theo_to_prac} for every $\omega$ for which $h = A(S(\omega))$ has weak $\H$-model drift, it also has $\ell$-model drift, by (considering the proof of) Lemma~\ref{lem:ell_and_H_model_drift}.

\textbf{``$\ell$-model drift $\Rightarrow$ $A$-model drift'': } 
Choose $h_1,h_2 \in \H$ such that $\loss_{W_i}(h_i) \leq \inf_{h^*\in \H} \loss_{W_i}(h^*) + C_\ell/4,\; i =1,2$. Choose $N_1,N_2$ according to $h_1,h_2$ and $\D_{W_1},\D_{W_2}$ such that with probability at least $1-\delta/4$ over all choices of $S_i$ of size at least $N_i$ it holds $\loss_{W_i}(A(S_i)) \leq \loss_{W_i}(h_i) + C_\ell/4$. Thus, $\loss_{W_i}(A(S_i)) \leq \inf_{h^* \in \H} \loss_{W_i}(h^*) + C_\ell/2$. Then it holds
\begin{align*}
    \loss_{W_2}(A(S_1)) &> \loss_{W_1}(A(S_1)) + C_\ell \\&\geq \inf_{h^* \in \H} \loss_{W_1}(h^*) + C_\ell \\&= \inf_{h^* \in \H} \loss_{W_2}(h^*) + C_\ell \\&\geq \loss_{W_2}(A(S_2)) + C_\ell/2,
\end{align*}
with probability at least $1-3\delta/4 > 1-\delta$ over all choices of $S_1,S_2$ with size at least $N_1,N_2$. Thus, by setting $C_A = C_\ell/2$ the statement follows.
}
\end{toappendix}

Notice that this result provides a theoretical justification for the common practice in active learning, to use drift detectors on the ITTE determining whether or not to retrain the model. The statement only holds if the optimal loss is constant -- otherwise, the ITTE is  misleading and can result in both false positive and false negative implications (see Figure~\ref{fig:ITTE}). 

\section{Empirical Evaluation}
\label{sec:Experiments}

In the following, we demonstrate our theoretical insights in experiments and quantify their effects. 
All results which are reported in the following are statistically significant (based on a $t$-test, $p < 0.001$).
All experiments are performed on the following standard synthetic benchmark datasets 
AGRAWAL~\cite{Agrawal1993DatabaseMA},
LED~\cite{asuncion2007uci},
MIXED~\cite{LearningWithDriftDetection},
RandomRBF~\cite{skmultiflow},
RandomTree~\cite{skmultiflow},
SEA~\cite{seadata},
Sine~\cite{LearningWithDriftDetection}, 
STAGGER~\cite{LearningWithDriftDetection}
and the following real-world benchmark datasets
``Electricity market prices''~(Elec)~\cite{electricitymarketdata}, 
``Forest Covertype''~(Forest)~\cite{forestcovertypedataset}, and 
``Nebraska Weather'' (Weather)~\cite{weatherdataset}.
To remove effects due to unknown drift in the real-world datasets, we apply a permutation scheme~\cite{ida2022}, and we induce real drift  by a label switch. As a result, all datasets have controlled real drift and no virtual drift. We induce virtual drift by segmenting the data space using a random decision tree.
For comparability, all problems are turned into binary classification tasks with class imbalance below $25\%$. 
This way we obtained $2 \times 2$ distributions with controlled drifting behavior, i.e., $\D_{ij}(X,Y) = \D_i(X) \D_j(Y|X), \; i,j \in \{0,1\}$.

\begin{figure}[t]
    \centering
    \begin{minipage}[b]{0.28\textwidth}
    \centering
    \includegraphics[width=\textwidth]{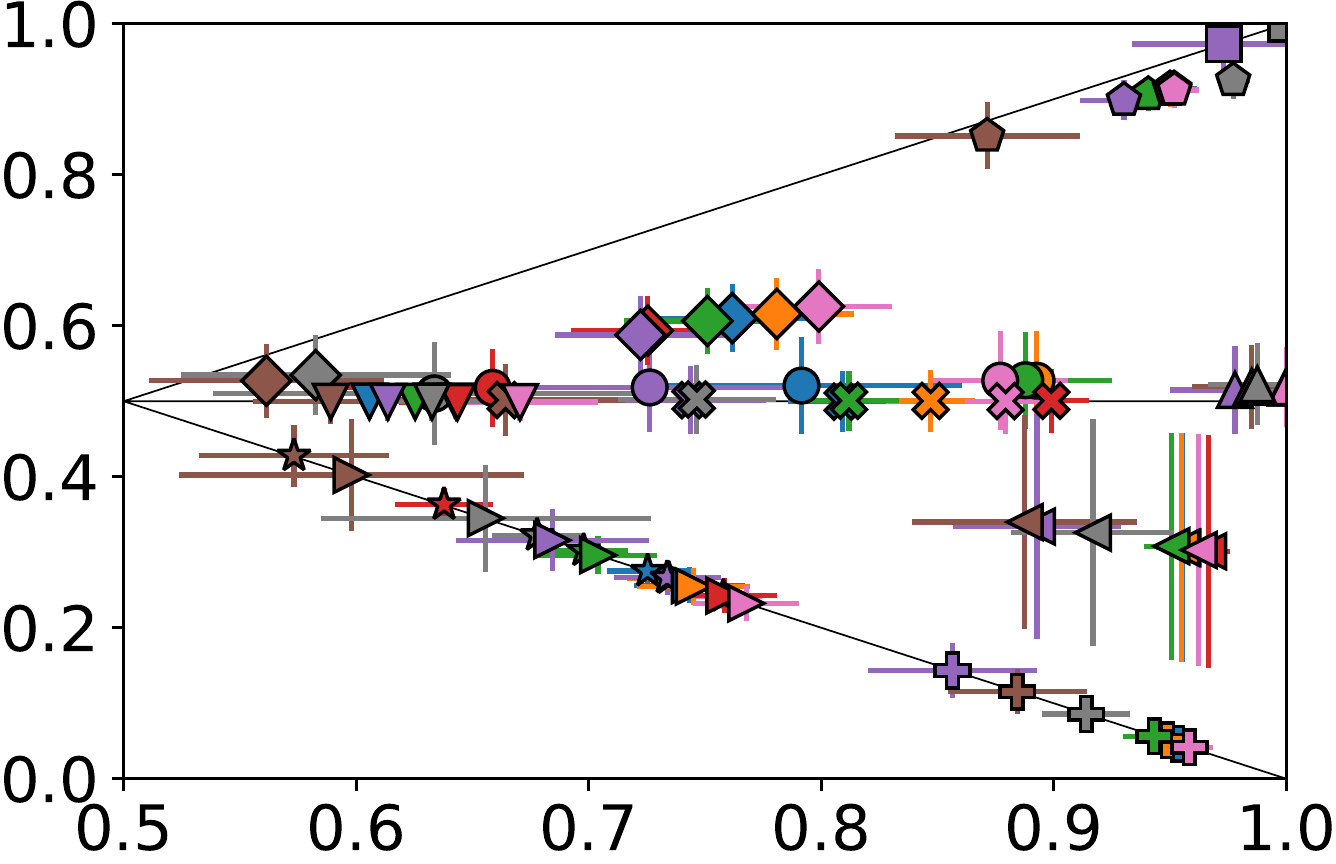}
    \subcaption{None vs. real drift}
    \end{minipage}
    \hfill
    \begin{minipage}[b]{0.28\textwidth}
    \centering
    \includegraphics[width=\textwidth]{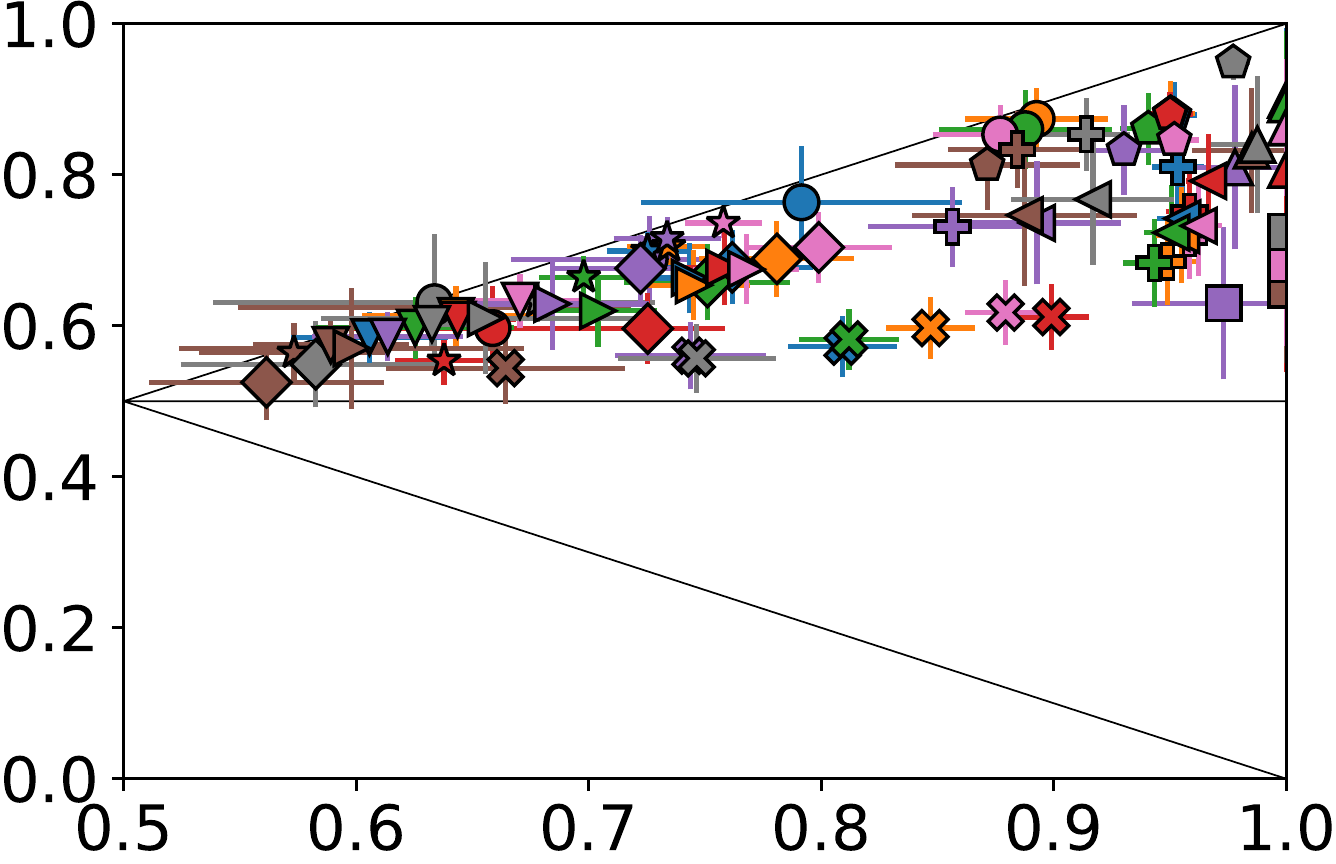}
    \subcaption{None vs. virtual drift}
    \end{minipage}
    \hfill
    \begin{minipage}[b]{0.28\textwidth} 
    \centering
    \includegraphics[width=\textwidth]{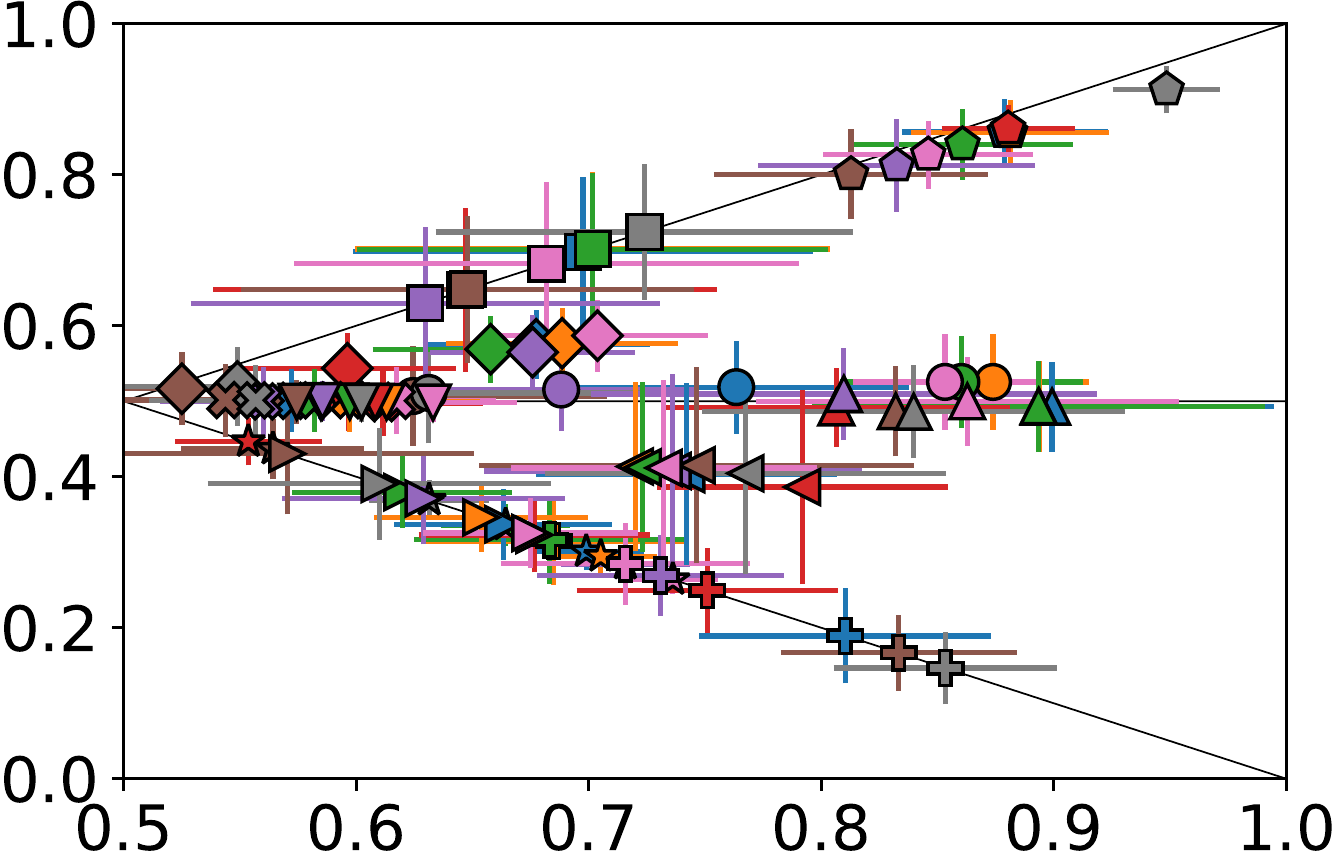}
    \subcaption{Virtual drift vs. both}
    \label{fig:exp1:c}
    \end{minipage}
    \hfill
    \begin{minipage}[b]{0.10\textwidth} 
    \centering
    \includegraphics[width=\textwidth,trim={0 0 7cm 0},clip]{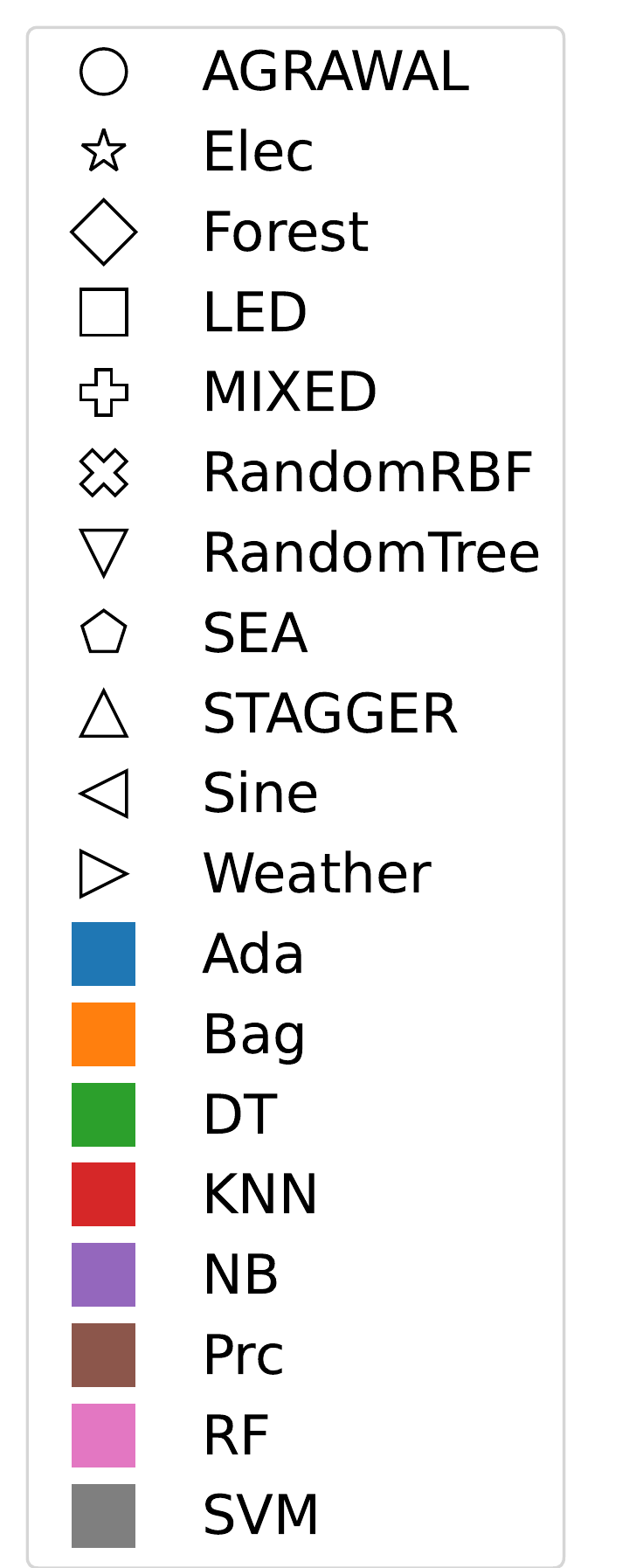}
    \end{minipage}
    \caption{Comparison of types of drift for different datasets (marker) and models (color). For the sake of clarity, error bars show $\frac{1}{2}$ of standard deviation. }
    \label{fig:exp1}
\end{figure}

To show the effect of real and virtual drift on classification accuracy, we draw train and test samples from those distributions which correspond to the time windows in Section~\ref{sec:Theo} and compute the train-test error of the following models: Decision Tree (DT), Random Forest (RF), $k$-Nearest Neighbour ($k$-NN), Bagging (Bag; with DT), AdaBoost (Ada; with DT), Gaussian Na\"ive Bayes (NB), Perceptron (Prc), and linear SVM (SVM)~\cite{scikit-learn}.
We repeated the experiment $1{,}000$ times. The results are shown in Figure~\ref{fig:exp1}. We found that real and virtual drift causes a significant decrease in accuracy compared to the non-drifting baseline for all models and datasets (except for Prc and SVM on AGRAWAL on virtual drift where the results are inconclusive). A combination of real and virtual drift decreased the accuracy even further if compared to the non-drifting baseline and virtual drift only.
These findings are in strong agreement with Theorem~\ref{thm:justify_active} and show that virtual drift can cause a significant decrease in accuracy although it is usually considered less relevant for the performance of a model.

\begin{figure}[t]
    \centering
    \begin{minipage}[b]{0.31\textwidth}
    \centering
    \includegraphics[width=\textwidth]{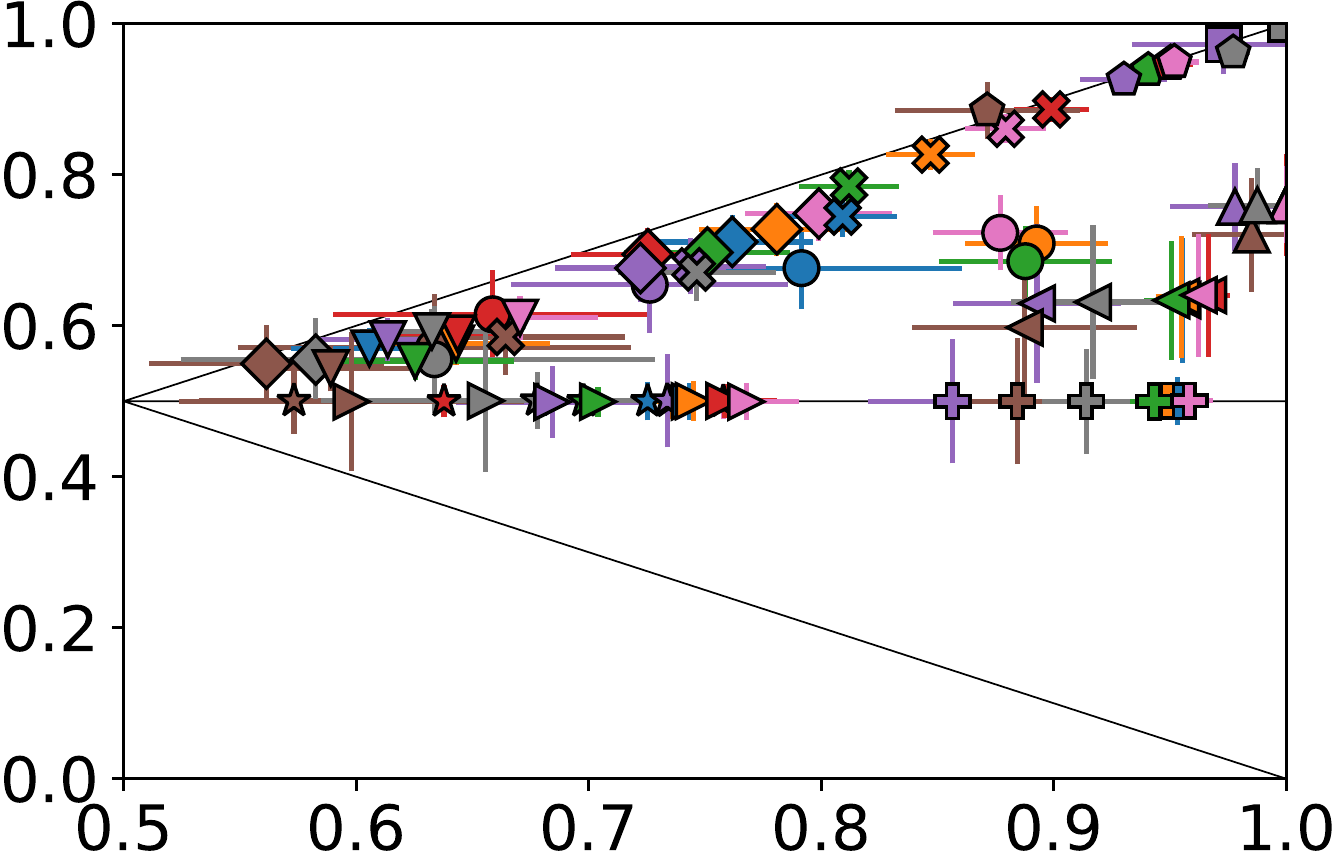}
    \subcaption{None vs. composed real drift}
    \end{minipage}
    \hfill
    \begin{minipage}[b]{0.31\textwidth}
    \centering
    \includegraphics[width=\textwidth]{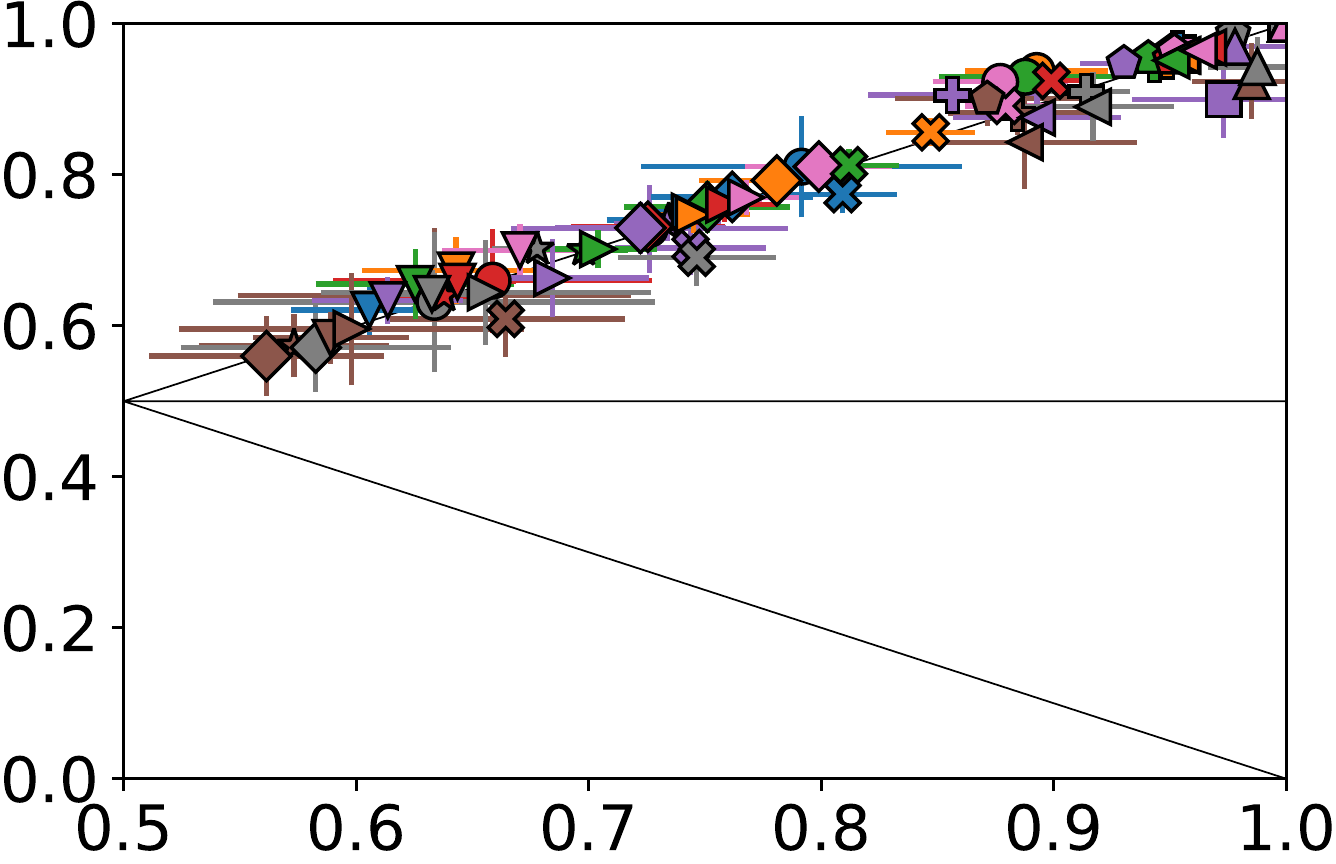}
    \subcaption{None vs. composed virtual drift}
    \end{minipage}
    \hfill
    \begin{minipage}[b]{0.31\textwidth} 
    \centering
    \includegraphics[width=\textwidth]{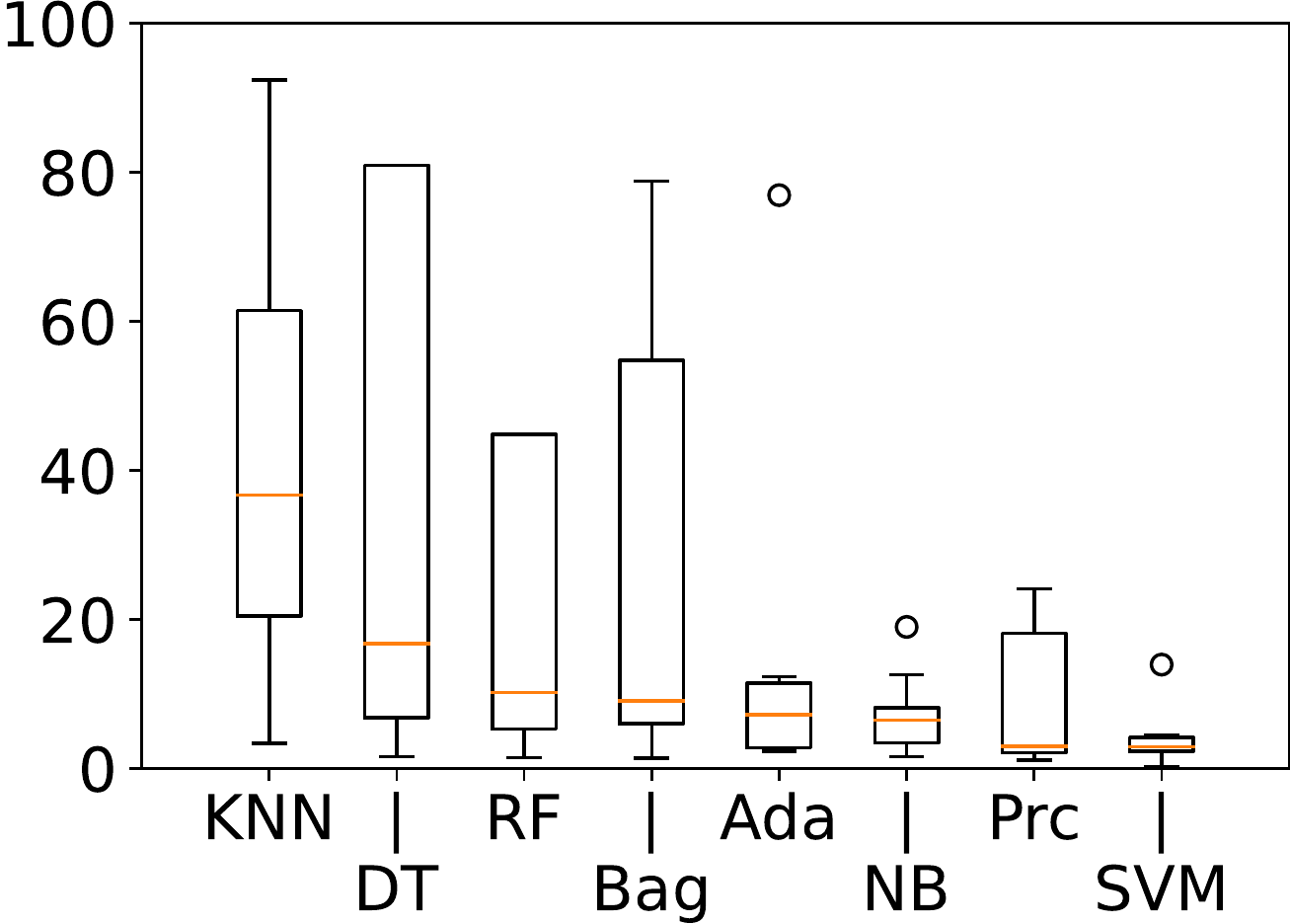}
    \subcaption{Usage of additional information}
    \label{fig:exp2:c}
    \end{minipage}
    \hfill
    \caption{Evaluation of composed windows. Plots (a) and (b) use the same color/marker scheme as Figure~\ref{fig:exp1}.}
    \label{fig:exp2}
\end{figure}

To evaluate the necessity to reset the training window after drift we combined two windows that differ in one drift type, i.e., virtual or real drift, and
proceed as before. An overview of the results is presented in Figure~\ref{fig:exp2}. As expected, the models trained on the composed windows outperform the ones trained on the non-composed samples (except for SVM on AGRAWAL with virtual drift where the results are inconclusive). 
In comparison to the non-drifting baseline, the composed real drift models are outperformed, and the composed virtual drift model are mainly inconclusive.
A further analysis of the latter scenario is presented in Figure~\ref{fig:exp2:c} (we consider $\left|(c-v)/(n-c)\right|$ which is a measure for the usage of additional information where $c$ is composed virtual, $v$ virtual, and $n$ no drift).
As can be seen, NB, Prc, and SVM do not profit, and DT and $k$-NN profit most, RF, Bag, and Ada profit moderately from the additional information. 
These findings are in strong agreement with Corollary~\ref{cor:rest} and Lemma~\ref{lem:H_model_drift} as they quantitatively show that more flexible models are better at handling virtual drift in the training window.

\section{Discussion and Conclusion}
\label{sec:Conclusion}

In this work, we considered the problem of online and stream learning with drift from a theoretical point of view. Our main results aim at the application of active methods that adapt to drift in data streams by mainly considering the ITTE. In contrast to many other works in this area, we focused on consistency and/or change of the decision boundary as indicated by models and loss functions. Furthermore, our approach also applies to semi- and unsupervised setups, e.g., clustering, dimensionality reduction, etc. More general notions of time, e.g., computational nodes as considered in federated learning, are also covered. To the best of our knowledge, it is the first of this kind.

\bibliographystyle{acm}
\bibliography{bib}

\end{document}